\def\eqref#1{equation~\ref{#1}}
\def\1{\bm{1}}
\DeclareMathAlphabet{\mathsfit}{\encodingdefault}{\sfdefault}{m}{sl}
\SetMathAlphabet{\mathsfit}{bold}{\encodingdefault}{\sfdefault}{bx}{n}
\newtheorem{proposition}{Proposition}
\definecolor{smoothred}{RGB}{220, 50, 47} 
\definecolor{smoothblue}{RGB}{68, 90, 225} 
\definecolor{smoothgreen}{RGB}{80, 160, 90}
\definecolor{text-green}{RGB}{91,124,131}
\definecolor{text-yellow}{RGB}{168,146,89}
\definecolor{table-yellow}{RGB}{250, 223, 161}
\definecolor{table-blue}{RGB}{173, 216, 230}
\definecolor{table-green}{RGB}{126, 172, 181}
\definecolor{darkblue}{rgb}{0, 0, 0.5}
\title{PACR: Progressively Ascending Confidence Reward for LLM Reasoning}
\author{Eunseop Yoon$^{1}$\,\,\,\,\,\,\,\,\,\, Hee Suk Yoon$^{1}$\thanks{Work done during an internship at Microsoft Research Asia.}\,\,\,\,\,\,\,\,\,\, Jaehyun Jang$^{1}$\,\,\,\,\,\,\,\,\,\, SooHwan Eom$^{1}$\,\,\,\,\,\,\,\,\,\, \\\bf{Qi Dai}$^{2}$\,\,\,\,\,\,\,\,\,\, \bf{Chong Luo}$^{2}$\,\,\,\,\,\,\,\,\,\, \bf{Mark Hasegawa-Johnson}$^{3}$\,\,\,\,\,\,\,\,\,\, \bf{Chang D. Yoo}$^{1}$\thanks{Corresponding Author} \\
$^{1}$Korea Advanced Institute of Science and Technology (KAIST)\,\,\,\,\,\,\,\,\,\,\\$^{2}$Microsoft Research Asia (MSRA)\,\,\,\,\,\,\,\,\,\,$^{3}$University of Illinois at Urbana-Champaign (UIUC)\\
}
\begin{document}

\maketitle

\begin{abstract}
Reinforcement Learning with Verifiable Rewards (RLVR) has significantly improved LLM reasoning, but its sparse, outcome-based reward provides no guidance for intermediate steps, slowing exploration. We propose \underline{\textbf{P}}rogressively \underline{\textbf{A}}scending \underline{\textbf{C}}onfidence \underline{\textbf{R}}eward (\textbf{PACR}), a dense, model-intrinsic reward computed directly from the model’s evolving belief in the correct answer. PACR encodes the inductive bias that, along a well-formed reasoning trajectory, the probability of the ground-truth answer should have a generally ascending trend. We provide empirical and theoretical analysis validating that such an inductive bias constrains the exploration search space to regions richer in logically sound reasoning. We demonstrate that PACR accelerates exploration, reaches reward saturation with fewer trajectories, and yields improvements on multiple benchmarks. Our results suggest that dense, model-intrinsic shaping signals can make RLVR training more effective and reliable. Code will be released.

\end{abstract}

\section{Introduction}
\begin{wrapfigure}{r}{0.5\textwidth}
    \vskip -0.5in
    \centering    
    \includegraphics[width=0.4\textwidth]{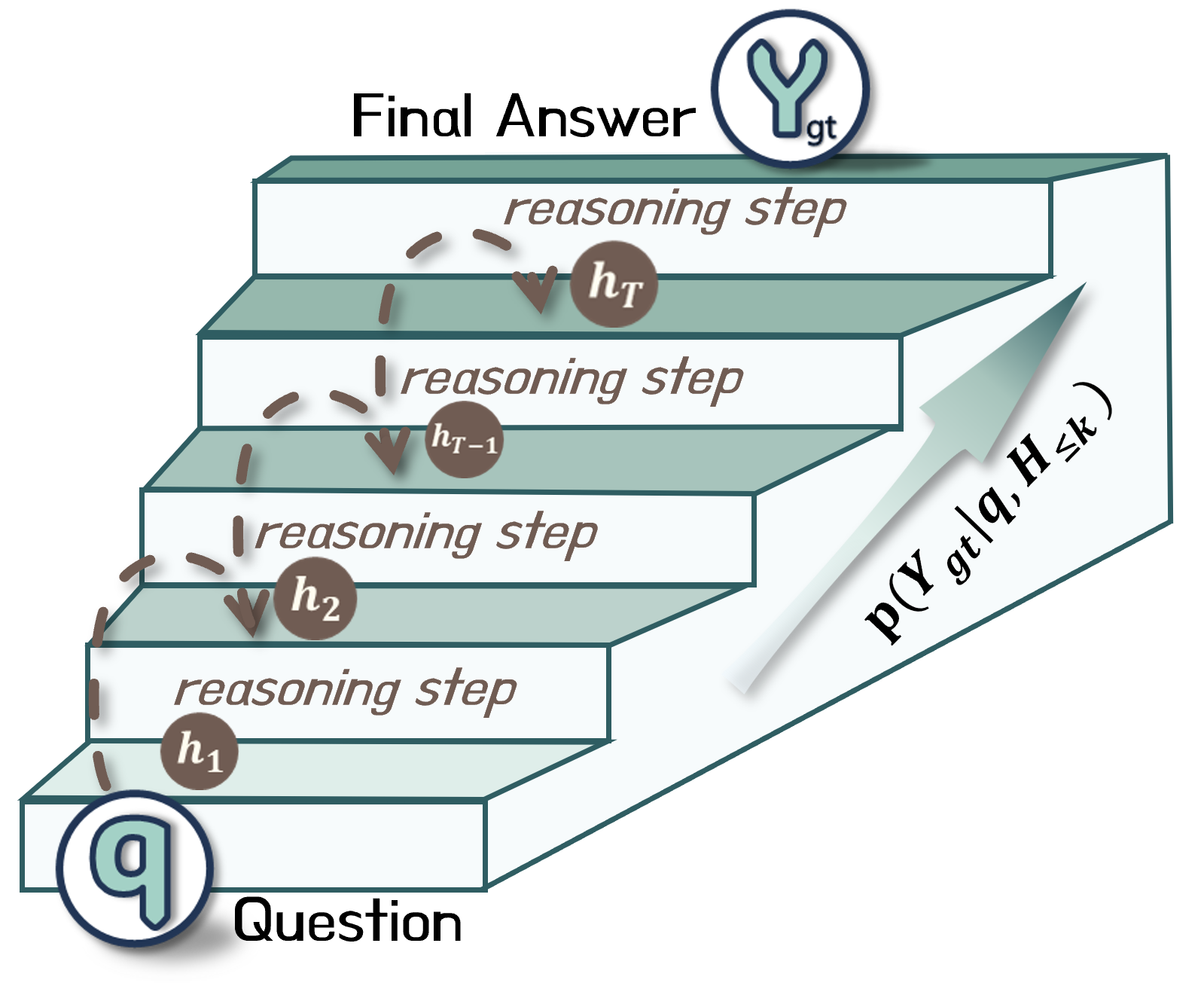}
    \vskip -0.1in
\caption{\textbf{Stepwise confidence growth.} For a question $q$, a well-formed sequence of reasoning steps $h_1,\ldots,h_k$ should increase the model's probability of the ground-truth answer $Y_{\mathrm{gt}}$ across steps.} 

    \label{fig:intro}
    \vskip -0.1in
\end{wrapfigure}
Pre-trained large language models (LLMs) exhibit strong performance on complex, multi-step reasoning tasks \citep{gemini2.5, model_qwen3, qwq32b}. Reinforcement Learning with Verifiable Rewards (RLVR) has emerged as a leading approach for further improving such capabilities, using a programmatically checkable terminal metric (e.g., exact-match on the final answer) as the reward \citep{grpo, deepseek-r1}. While effective, the standard RLVR formulation supplies a sparse terminal accuracy signal, offering no guidance for intermediate steps and thus exacerbating credit assignment. Alternative process-based supervision employs external reward models to score intermediate reasoning, but is costly to train, data-hungry, and prone to misalignment \citep{prime, prm2}.

This work asks whether we can obtain \emph{stepwise supervision} directly from the model. Psycholinguistic work shows that people interpret language incrementally, updating expectations with each word; as context accumulates, uncertainty falls and the correct interpretation becomes more likely \citep{psycholinguistic1, psycholinguistic2}. By the same logic, in tasks with a verifiable final answer, a correct intermediate step should typically raise the model’s probability of the ground-truth answer. Concretely, given a question \(q\), a reasoning prefix \(H_{\le k}\), and ground truth \(Y_{\mathrm{gt}}\), we track the model’s confidence \(p\!\left(Y_{\mathrm{gt}}\mid q, H_{\le k}\right)\) and expect a general upward trend over steps (Figure \ref{fig:intro}).

Guided by this premise, we introduce the \textbf{Progressively Ascending Confidence Reward (PACR)}, a dense, model-intrinsic signal that converts confidence growth into stepwise supervision for LLM reasoning during reinforcement learning. During training, as the model produces a sequence of reasoning steps for a question with a verifiable answer, we evaluate at each step the log-probability assigned to the ground-truth answer and reward any positive change, effectively encouraging a consistently upward trend in confidence. Because PACR is computed from the model’s own probabilities, it requires no external reward model and is available at every step, improving credit assignment and steering search toward faithful trajectories. We pair PACR with the standard RLVR terminal accuracy reward so the objective remains anchored to verifiable correctness while the process signal shapes the reasoning path. \textbf{In detail, our contributions can be summarized as follows:}

\begin{itemize}[left=0em]
    \setlength\itemsep{0em}
\item \textbf{Empirical Validation of an Inductive Bias (Section \ref{sec:observation}).} We provide extensive observational evidence that ground-truth confidence growth acts as a powerful inductive bias. Our analyses on open-source LLMs reveal three key findings: (1) a \emph{consistent} confidence ascent strongly correlates with final answer correctness; (2) among correct answers, logically coherent reasoning paths exhibit an even \emph{more consistent} ascent than spurious ones; and (3) the \emph{magnitude} of the confidence gain effectively pinpoints pivotal reasoning steps.

\item \textbf{Theoretical Justification (Section \ref{sec:theoretical} and \ref{method}).} We provide a theoretical foundation for using confidence growth as a process reward. We prove that a reasoning step from an idealized oracle policy will, on average, increase or maintain the model's confidence in the ground truth, validating it as a strong inductive bias. Building on this, we formalize the \textbf{Progressively Ascending Confidence Reward (PACR)} and introduce two concrete methods for its implementation: \textbf{Sparse-PACR} for trajectory-level rewards and \textbf{Dense-PACR} for step-wise rewards.

\item \textbf{Experimental Results (Section \ref{sec:result}).} Across multiple reasoning benchmarks, augmenting RLVR with our PACR methods improves training dynamics and final performance. Our approach accelerates exploration and ultimately attains a higher, more consistent final score than the baseline, demonstrating a more effective and reliable training process.
\end{itemize}

\section{Related Work}
\label{related_work}
\paragraph{Outcome-based RL for LLM Reasoning}

Reinforcement Learning (RL) is increasingly used to fine-tune Large Language Models (LLMs). This is done not only to align models with human preferences via Reinforcement Learning from Human Feedback (RLHF) \citep{rlhf, remax, hhrlhf} but also to enhance their reasoning abilities for complex problem-solving \citep{self_correction}. To improve these reasoning capabilities, a recent prominent approach is Reinforcement Learning with Verifiable Reward (RLVR) \citep{deepseek-r1, model_qwen_math, deepseek_math}, which uses an outcome-based reward instead of a proxy reward model. For example, a reward of 1 is assigned for a correct answer and 0 (or -1) for an incorrect one. Then, the model generates multiple trajectories for a single problem. The reward for each trajectory is then compared against the average reward across all samples in the group, and this relative reward is used as an advantage to train the model. This outcome-based reward system is widely explored \citep{drgrpo, dapo, openreasoner, simplerl} because it is easily scalable, and mitigates concerns about reward hacking by eliminating the need for a separate reward model \citep{deepseek-r1}. 

\paragraph{Dense Reward for LLMs Finetuning with RL}
To overcome the limitations of holistic, trajectory-level sparse rewards, various approaches for providing dense rewards have been explored. In RLHF, for instance, approaches include training an external reward model to assign token-level rewards using synthesized data \citep{tlcr}, utilizing a more mature external model as the reward model \citep{drlc, finegrained}, and use implicit reward signal from reward model \citep{abc}. Similarly, direct alignment algorithms (e.g., DPO \citep{DPO}) have been adapted to provide dense rewards by re-framing DPO's implicit reward at a token level \citep{tdpo, tgdpo, rto, densedpo} or by selectively using specific tokens for the reward signal \citep{confpo, tis-dpo}. For training a reasoning LLM with RL, previous approaches include training a Process Reward Model (PRM) for process-level rewards \citep{prm_qvalue, prm2, progrm}, or defining a DPO-like implicit reward at the token level \citep{prime, freeprocess}. 

\section{Background and Problem Setup}
\label{preliminary}
This section introduces the notation for reasoning trajectories, how we segment and evaluate stepwise confidence in the ground-truth answer, and the RL objective we use in training.
\paragraph{Reasoning Trajectories and Notation.} Given a question $q$, a policy $\pi_\theta$ generates a \emph{sequence of reasoning steps} $H=(h_1,\ldots,h_T)$ and a final answer $\hat{Y}$. Let $Y_{\mathrm{gt}}$ denote the verifiable ground-truth answer. We write $H_{\le k}=(h_1,\ldots,h_k)$ for the reasoning steps up to step $k$. We analyze and shape the reasoning process by tracking how the model’s probability of $Y_{\mathrm{gt}}$ evolves with the prefix $H_{\le k}$.

\paragraph{Segmenting the Reasoning Process and Stepwise Ground-truth Confidence.}
Similar to \citet{segmenting1}, we segment each generated reasoning trace into discrete steps $\{h_k\}_{k=1}^{T}$ using a simple, model-agnostic rule: start a new step at a newline (`\texttt{\textbackslash n}') or at a period followed by a space (`\texttt{. }'); fragments shorter than five tokens are merged with the preceding step to avoid overly fine splits. To measure ground-truth–anchored confidence at step $k$, we standardize the answer format by appending a short prefix $y^{0}_{\mathrm{gt}}$ (e.g., `\texttt{So the final answer is \textbackslash boxed\{}') and evaluate the model’s probability of the ground-truth answer $Y_{\mathrm{gt}}=(y^1_{\mathrm{gt}},\ldots,y^L_{\mathrm{gt}})$ under the current prefix $H_{\le k}$. Writing $Y_{\mathrm{gt}}=(y^1_{\mathrm{gt}},\ldots,y^L_{\mathrm{gt}})$, we measure the
\textbf{ground-truth confidence} at step $k$ as
\begin{equation}
\log p(Y_{gt} | q, H_{\leq k}) \;=\; \sum_{l=1}^{L} \log p_\theta\!\big(y^l_{\mathrm{gt}} \,\big|\, q,\, H_{\le k},\, y^{0}_{\mathrm{gt}},\, y^{<l}_{\mathrm{gt}}\big),
\end{equation}
where $y^{<l}_{\mathrm{gt}}$ are preceding answer tokens. This measures the model's confidence in the ground truth answer at any given stage of its reasoning steps. 

\paragraph{Group Relative Policy Optimization (GRPO)}
GRPO \citep{grpo} estimates advantages by comparing returns \emph{within} a group of $N$ samples rather than using a learned value function. For a given question $q$ (with verifiable answer $Y_{\mathrm{gt}}$), the behavior policy $\pi_{\theta_{\mathrm{old}}}$ generates $N$ trajectories
\begin{equation}
\{\tau^{(i)}\}_{i=1}^{N}, \qquad 
\tau^{(i)} \;=\; \big(H^{(i)},\, \hat{Y}^{(i)}\big),
\end{equation}
where $H^{(i)}=(h^{(i)}_1,\ldots,h^{(i)}_{T_i})$ are the reasoning steps, $T_i$ is the number of steps for $i$-th trajectory  and $\hat{Y}^{(i)}$ is the predicted answer for $i$-th trajectory. 

For each sampled trajectory $i$, we compare the predicted answer $\hat{Y}^{(i)}$ with the ground truth $Y_{\mathrm{gt}}$ and assign a binary terminal accuracy reward:
\begin{equation}
R^{(i)} \;=\;
\begin{cases}
1, & \texttt{is\_equivalent}\!\big(\hat{Y}^{(i)},\, Y_{\mathrm{gt}}\big) \\
0, & \text{otherwise.}
\end{cases}
\end{equation}
Here \texttt{is\_equivalent} performs task-specific normalization (e.g., stripping whitespace/punctuation, handling LaTeX boxing, case folding, and numeric tolerances) before exact match. The group-relative advantage for trajectory $i$ is computed by centering (and optionally standardizing) its reward within the cohort of $N$ samples:
\begin{equation}
A^{(i)} = \frac{R^{(i)} - \text{mean}(\{R^{(i)}\}_{i=1}^N)}{\textcolor{smoothgreen}{\text{std}(\{R^{(i)}\}_{i=1}^N)}}.
\label{eq:grpo_adv}
\end{equation}

Similar to PPO \citep{ppo}, GRPO adopts a clipping  with KL penalty:
\scriptsize
\begin{equation}
\begin{aligned}
&\mathcal{J}_\text{GRPO}(\theta)=\\ &\mathop{\mathbb{E}}\limits_{\substack{(q,Y_{gt})\sim \mathcal{D}\\ \{\tau^{(i)}\}\sim \pi_{\theta_{\text{old}}}(\cdot\mid q)}}
\Bigg[ \frac{1}{N}\sum_{i=1}^{N} \textcolor{smoothgreen}{\frac{1}{|\tau^{(i)}|}} \Bigg( 
\min \Big( \frac{\pi_{\theta}(\tau^{(i)} \mid q)}{\pi_{\theta_{\text{old}}}(\tau^{(i)} \mid q)}(\theta) {A}^{(i)},  
\ \text{clip} \Big( \frac{\pi_{\theta}(\tau^{(i)} \mid q)}{\pi_{\theta_{\text{old}}}(\tau^{(i)} \mid q)}, 1 - \varepsilon, 1 + \varepsilon \Big) {A}_{i} \Big)
- \beta D_{\text{KL}}(\pi_{\theta} || \pi_{\text{ref}}) 
\Bigg) \Bigg],
\label{eq:grpoloss}
\end{aligned}
\end{equation}
\normalsize
where $\mathcal{D}$ is the training dataset and $\pi_{\text{ref}}$ is a reference policy.
In our work, we follow the Dr. GRPO \citep{drgrpo} formulation, a bias-mitigated variant of GRPO. This approach modifies the standard GRPO algorithm by discarding the standard deviation from the advantage calculation and the length normalization from the loss function (the terms shown in \textcolor{smoothgreen}{\textbf{green}} in Eq. \ref{eq:grpo_adv} and Eq. \ref{eq:grpoloss}).

\section{Is Ground-Truth Confidence Growth a Useful Inductive Bias?}
\label{observation}
In this section, we present both empirical observations and a theoretical justification to assess whether ground-truth confidence growth constitutes a useful inductive bias for training LLM reasoning.
\paragraph{Ground-truth Confidence Growth.} We first quantify confidence growth by defining the stepwise confidence gain, $C_{k}$, as the change in the log-probability of the ground-truth answer induced by the addition of reasoning step $h_k$:
\begin{equation}
C_{k} \;\coloneqq\; \log \pi_\theta\!\big(Y_{\mathrm{gt}} \mid q, H_{\le k}\big)\;-\;\log \pi_\theta\!\big(Y_{\mathrm{gt}} \mid q, H_{<k}\big),
\label{eq:C_k}
\end{equation}
where $H_{\le k}=(h_1,\ldots,h_k)$ and $H_{<k}=(h_1,\ldots,h_{k-1})$. For $k=1$, $H_{<1}$ is the empty prefix. 
Intuitively, $C_{k} > 0$ indicates that step $h_k$ makes the ground truth more probable, whereas $C_{k} < 0$ indicates the opposite. (When indexing trajectories, we write $C^{(i)}_k$.) For brevity, we will hereafter use ``\textit{confidence growth}" and ``\textit{ground-truth confidence growth}" interchangeably.

\subsection{Observing Ground-truth Confidence Growth on Reasoning Models}
\label{sec:observation}

\paragraph{Observation 1: Consistent Confidence Ascent Correlates with Final Correctness.}

\begin{wrapfigure}{r}{0.4\textwidth}
    \vskip -0.2in
    \centering    
    \includegraphics[width=0.4\textwidth]{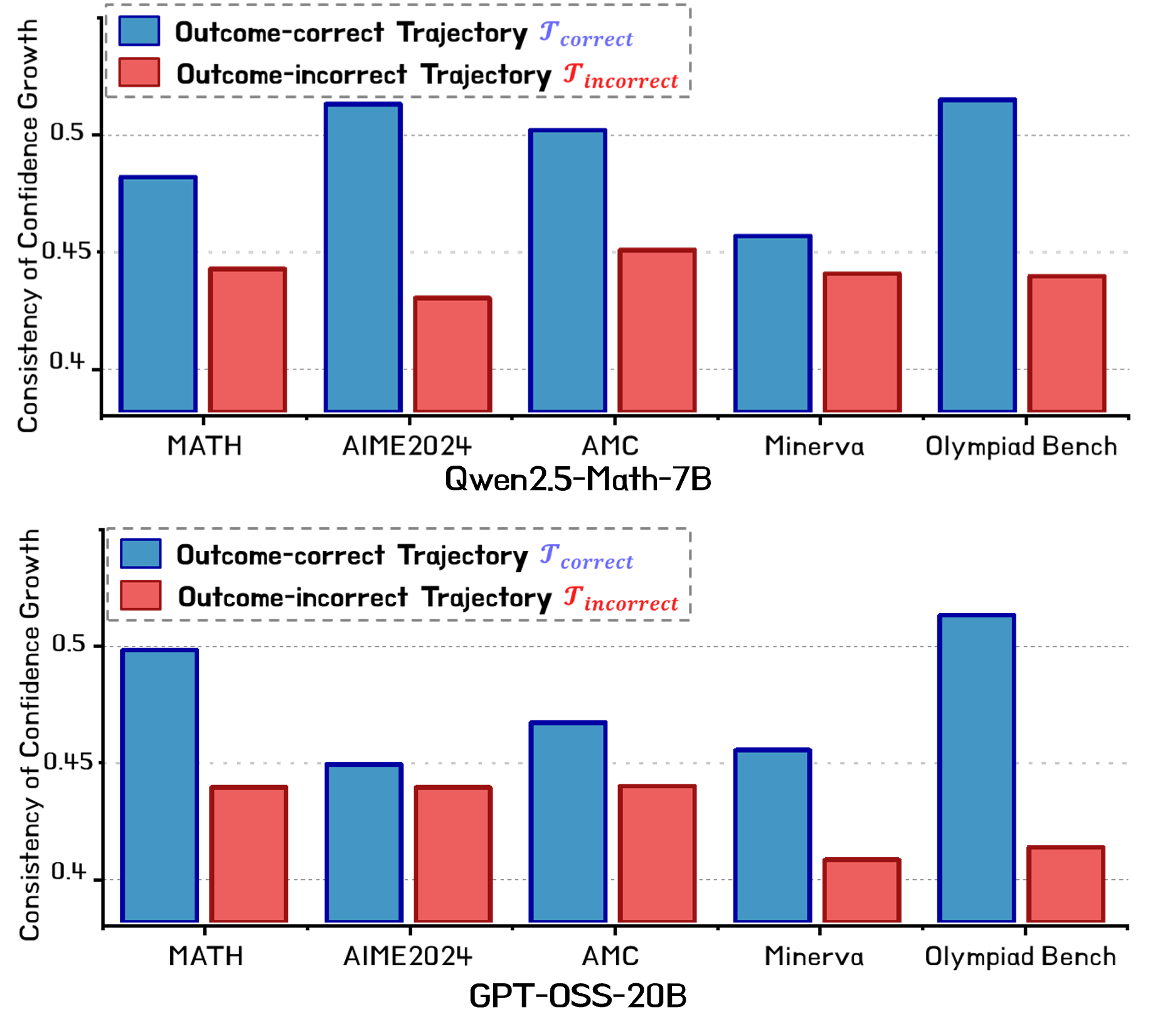}
    \vskip -0.1in
\caption{\textbf{Consistency of confidence growth correlates with correctness.} Outcome-correct trajectories (\textcolor{smoothblue}{$\mathcal{T}_{\text{correct}}$}) exhibit a higher proportion of steps with positive confidence gain ($C_k > 0$) compared to incorrect trajectories (\textcolor{smoothred}{$\mathcal{T}_{\text{incorrect}}$}).}
    \label{fig: obs1}
    \vskip -0.1in
\end{wrapfigure} 
To empirically validate the connection between ground-truth confidence growth and final outcome accuracy, we analyzed trajectories generated by several strong open-source LLMs, such as Qwen2.5-Math.7B \citep{model_qwen_math} and GPT-OSS-20B \citep{gpt-oss-20b}. Specifically, we prompted the models to generate a single reasoning trajectory ($\tau$) for each question in a large set of reasoning tasks. This entire collection of trajectories was then partitioned into two distinct sets: the set of \textcolor{smoothblue}{outcome-correct trajectores, $\mathcal{T}_{\text{correct}}$}, where the final answer matched the ground truth, and the set of \textcolor{smoothred}{outcome-incorrect trajectories, $\mathcal{T}_{\text{incorrect}}$}.We then calculated the \textbf{consistency of confidence growth}, defined as the proportion of positive-gain steps for each trajectory (i.e., $\frac{1}{T}\sum_{k=1}^T \mathbb{I}(C_k>0)$, where $\mathbb{I}$ is the indicator function). Our findings reveal a clear distinction between the two groups. As illustrated in Figure \ref{fig: obs1}, trajectories in \textcolor{smoothblue}{$\mathcal{T}_{\text{correct}}$} exhibited a higher proportion of steps with positive confidence gain compared to trajectories in \textcolor{smoothred}{$\mathcal{T}_{\text{incorrect}}$}. \textit{This indicates that reasoning paths that result in a correct answer tend to be characterized by a more consistent, progressive increase in the model's belief in the ground-truth answer.}
\paragraph{Observation 2: Coherent Reasoning Paths Exhibit More Consistent Confidence Ascent.}
\begin{wrapfigure}{r}{0.4\textwidth}
    
    \vskip -0.2in
    \centering    
    \includegraphics[width=0.35\textwidth]{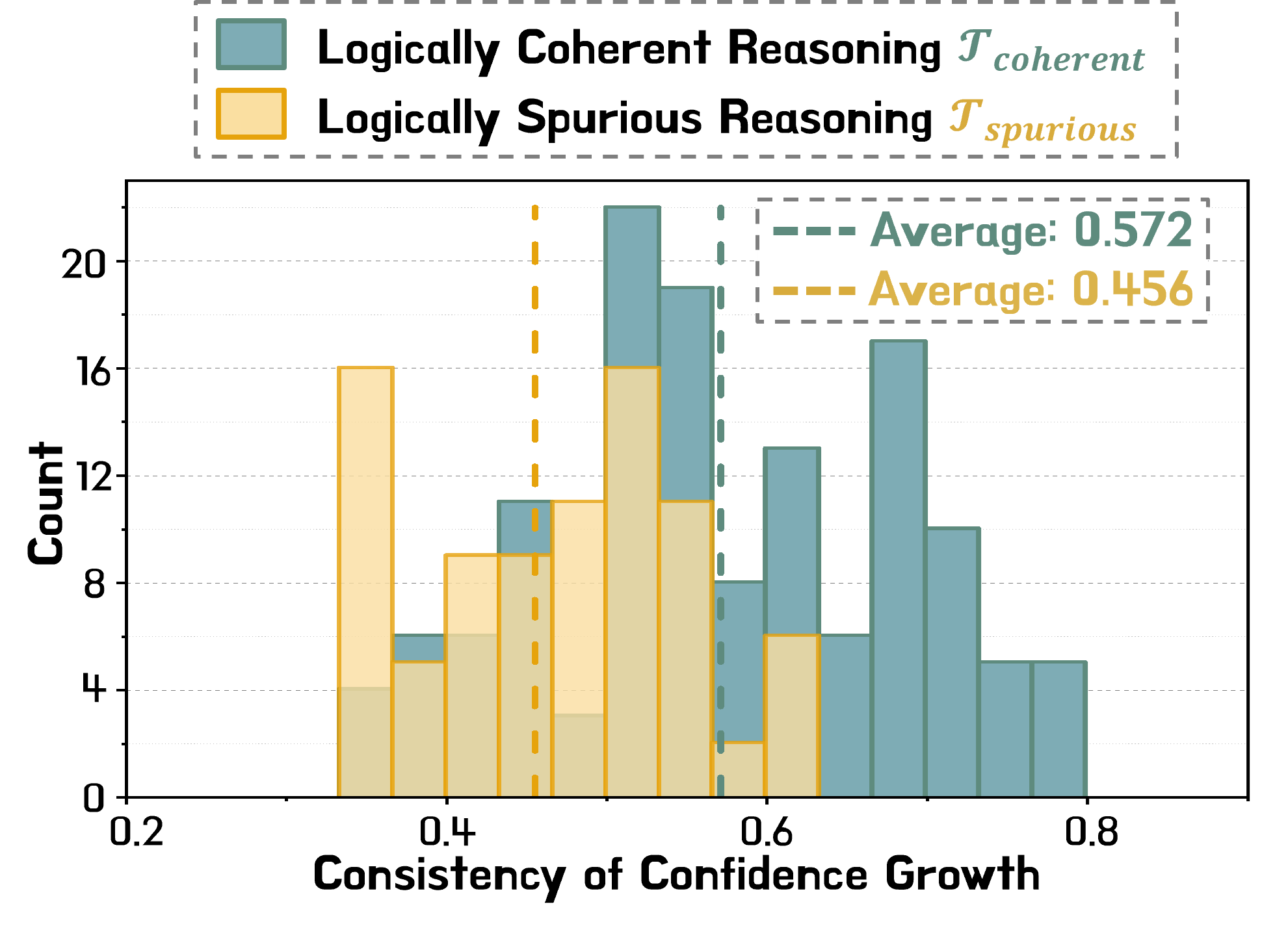}
    \vskip -0.1in
\caption{\textbf{Consistency of confidence growth reflects reasoning quality.} Coherent reasoning paths (\textcolor{text-green}{$\mathcal{T}_{\text{coherent}}$}) show a more consistent confidence ascent (higher proportion of $C_k > 0$ steps) than spurious paths (\textcolor{text-yellow}{$\mathcal{T}_{\text{spurious}}$}), even when both yield the correct final answer.} 

    \label{fig: obs2}
    \vskip -0.1in
\end{wrapfigure} 
While Observation 1 established a link between confidence growth and correct outcomes, we next sought to determine if this signal was also sensitive to the \emph{quality of the reasoning process}. A trajectory can arrive at the correct answer through flawed or spurious steps, and a robust process-level signal should be able to distinguish such cases. To investigate this, we focused exclusively on the set \textcolor{smoothblue}{$\mathcal{T}_{\text{correct}}$}. We employed a powerful external LLM evaluator (GPT-5) to further partition this set into two subgroups: those with \textcolor{text-green}{logically coherent reasoning $\mathcal{T}_{\text{coherent}}$} and those with \textcolor{text-yellow}{spurious reasoning $\mathcal{T}_{\text{spurious}}$}, where the correct answer was reached via flawed logic or irrelevant steps (see Appendix \ref{appendix:prompt} for detailed evaluation prompts). As shown in Figure \ref{fig: obs2}, the average proportion of positive-gain steps was significantly higher for \textcolor{text-green}{$\mathcal{T}_{\text{coherent}}$} compared to \textcolor{text-yellow}{$\mathcal{T}_{\text{spurious}}$}. This demonstrates that while both groups reached the correct final answer, the model’s confidence grew more consistently when following a logically sound path. \textit{This finding suggests that confidence ascent is not merely an indicator of the final outcome but also a signal reflecting the internal quality of the reasoning trace itself.}

\begin{figure}
\centering
    \includegraphics[width=1.0\textwidth]{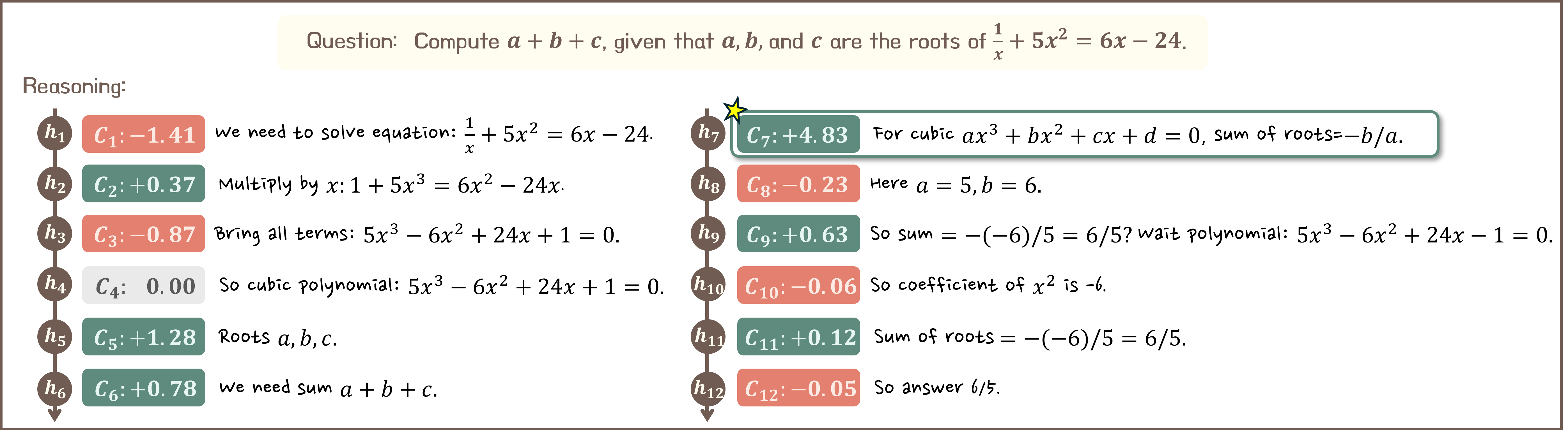}
\vskip -0.1in
\caption{\textbf{Qualitative example of a pivotal step.} Among the reasoning steps, a critical insight at step $h_7$ (the introduction of Vieta's formulas for a cubic equation) results in a large, distinct spike in the ground-truth confidence gain ($C_7 = +4.83$). This is substantially larger than the gains from more routine algebraic steps. Further qualitative examples are provided in Appendix \ref{appendix:obs3_qual}.} 
\vskip -0.1in
    \label{fig:observation3_qual}
\end{figure}

\paragraph{Observation 3: Large Stepwise Confidence Gains Pinpoint Pivotal Reasoning Steps.}
\begin{wrapfigure}{r}{0.4\textwidth}
    \vskip -0.2in
    \centering    
    \includegraphics[width=0.4\textwidth]{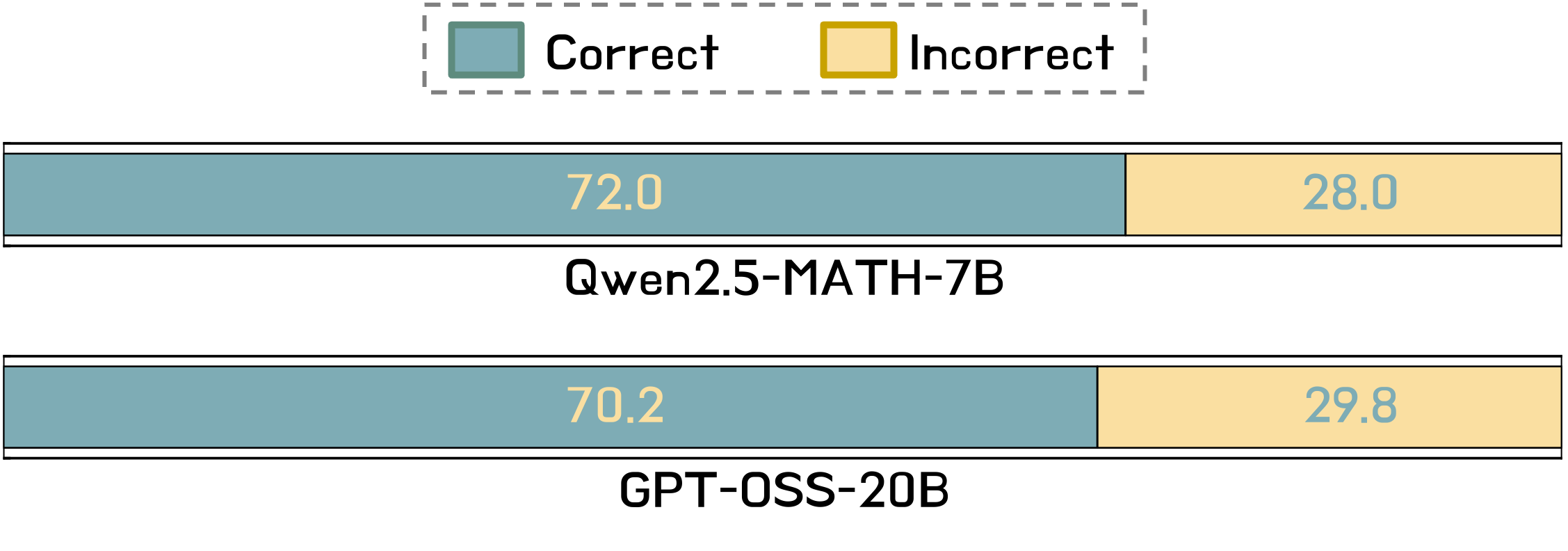}
    \vskip -0.1in
\caption{\textbf{Quantitative validation of step importance.} In a pairwise comparison, an LLM evaluator judged the step with the higher confidence gain ($C_i > C_j$) as more critical with a win rate significantly above chance, confirming that gain magnitude correlates with step importance.}
    \label{fig: obs3_quant}
    \vskip -0.1in
\end{wrapfigure} 
Beyond the overall trend of confidence, we investigated whether the \emph{magnitude} of the stepwise gain, $C_k$, correlates with the importance of individual reasoning steps. Qualitatively, we observe that large, positive spikes in $C_k$ often coincide with pivotal moments in the reasoning process, such as the application of a key theorem or a critical insight. For instance, as illustrated in Figure \ref{fig:observation3_qual}, a step introducing the sum of roots formula for a cubic equation yields a substantially larger confidence gain compared to adjacent steps involving routine algebraic manipulation. To validate this rigorously, we conducted a quantitative pairwise comparison. For trajectories in \textcolor{smoothblue}{$\mathcal{T}_{\text{correct}}$}, we randomly sampled pairs of reasoning steps, $h_i$ and $h_j$, under the condition that $C_i > C_j$. We then prompted an LLM evaluator (GPT-5) to judge which of the two steps was more critical for reaching the final solution (see Appendix \ref{appendix:prompt} for detailed evaluation prompts). The step with the higher confidence gain, $h_i$, was frequently identified as more critical, achieving a win rate significantly above chance (Figure \ref{fig: obs3_quant}). This finding suggests that the magnitude of the confidence gain is not arbitrary; it is a meaningful signal that effectively pinpoints influential steps within a reasoning chain. \textit{This provides a strong rationale for using it as a training objective, as maximizing $C_k$ would directly incentivize the model to generate these critical, problem-solving actions.}

\subsection{Theoretical Justification for Ground-Truth Confidence Growth as a Process Reward}
\label{sec:theoretical}
Building on our empirical findings, we now provide a theoretical foundation for using confidence growth as a process reward. We prove that a reasoning step sampled from an ideal ``oracle" policy (one that generates steps consistent with the ground truth, i.e., faithful steps) will, on average, increase or maintain the model's confidence in that ground truth.

\paragraph{The Oracle Policy Assumption} Our theoretical analysis is built on the following assumption: a capable LLM, when conditioned on a correct final answer, is able to generate a faithful and logically sound reasoning path. This assumption is well-founded, as modern LLMs excel at rationalization; their training enables them to construct coherent explanations that bridge a given question and its answer. Based on this premise, we can construct an idealized model for analysis, which we term an oracle policy $\pi_{\text{oracle}}$. This policy is the model's own generative process given access to the ground-truth answer $Y_{\text{gt}}$, sampling the next step $h_k$ from the distribution $\pi_\theta(h_k|q, Y_{\text{gt}}, H_{<k})$.

\begin{proposition}
Let $C_k$ be the stepwise confidence gain at step $k$. The expected value of $C_k$ is non-negative when the expectation is taken over reasoning steps $h_k$ sampled from the oracle policy. Formally:
$$
\mathbb{E}_{h_k \sim \pi_\theta(\cdot | q, Y_{\mathrm{gt}}, H_{<k})} [C_k] \ge 0.
$$
\end{proposition}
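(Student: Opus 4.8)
The plan is to express $\mathbb{E}_{h_k}[C_k]$ as a KL-divergence-like quantity and invoke nonnegativity of relative entropy (equivalently, Jensen's inequality). First I would write out $C_k$ using Bayes' rule. By definition,
\[
C_k = \log \pi_\theta(Y_{\mathrm{gt}} \mid q, H_{\le k}) - \log \pi_\theta(Y_{\mathrm{gt}} \mid q, H_{<k}),
\]
and since $H_{\le k} = (H_{<k}, h_k)$, I would apply the identity
\[
\pi_\theta(Y_{\mathrm{gt}} \mid q, H_{<k}, h_k) = \frac{\pi_\theta(h_k \mid q, Y_{\mathrm{gt}}, H_{<k}) \, \pi_\theta(Y_{\mathrm{gt}} \mid q, H_{<k})}{\pi_\theta(h_k \mid q, H_{<k})},
\]
which is just the chain rule applied two ways to the joint $\pi_\theta(h_k, Y_{\mathrm{gt}} \mid q, H_{<k})$. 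Substituting, the $\log \pi_\theta(Y_{\mathrm{gt}} \mid q, H_{<k})$ terms cancel and I get the clean form
\[
C_k = \log \frac{\pi_\theta(h_k \mid q, Y_{\mathrm{gt}}, H_{<k})}{\pi_\theta(h_k \mid q, H_{<k})}.
\]

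Next I would take the expectation over $h_k \sim \pi_\theta(\cdot \mid q, Y_{\mathrm{gt}}, H_{<k})$, which is exactly the oracle policy. This yields
\[
\mathbb{E}_{h_k \sim \pi_\theta(\cdot \mid q, Y_{\mathrm{gt}}, H_{<k})}[C_k] = \sum_{h_k} \pi_\theta(h_k \mid q, Y_{\mathrm{gt}}, H_{<k}) \log \frac{\pi_\theta(h_k \mid q, Y_{\mathrm{gt}}, H_{<k})}{\pi_\theta(h_k \mid q, H_{<k})} = D_{\mathrm{KL}}\!\big(\pi_\theta(\cdot \mid q, Y_{\mathrm{gt}}, H_{<k}) \,\big\|\, \pi_\theta(\cdot \mid q, H_{<k})\big).
\]
Since the KL divergence between any two distributions is nonnegative (by Gibbs' inequality, or Jensen applied to $-\log$), the right-hand side is $\ge 0$, which is the claim. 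I would note equality holds iff the next-step distribution is unaffected by conditioning on $Y_{\mathrm{gt}}$, i.e., when the step carries no information about the ground truth.

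The main obstacle is not the inequality itself — that is a one-line KL argument — but making the measure-theoretic / summation setup rigorous and, more importantly, justifying that the two conditional distributions share a common support so the KL is well-defined (finite) rather than $+\infty$; in practice one argues that $\pi_\theta(h_k \mid q, H_{<k}) = 0 \Rightarrow \pi_\theta(h_k \mid q, Y_{\mathrm{gt}}, H_{<k}) = 0$ because the oracle's step set is a subset of what the base policy can emit. A secondary subtlety worth a remark is that the proposition is a statement about a single step's conditional expectation given a fixed prefix $H_{<k}$ (which may itself be off-policy); it does not by itself imply a monotone trajectory, only that each faithful step is nonnegative in expectation. I would keep the proof to the Bayes-rule rewrite plus the KL nonnegativity citation, and relegate the support/regularity caveats to a footnote or remark.
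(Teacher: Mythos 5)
Your proof is correct and follows exactly the paper's argument: rewrite $C_k$ via Bayes' rule to cancel the prior term, recognize the oracle-policy expectation as $D_{\mathrm{KL}}\big(\pi_\theta(\cdot\mid q,Y_{\mathrm{gt}},H_{<k})\,\|\,\pi_\theta(\cdot\mid q,H_{<k})\big)$, and invoke nonnegativity of KL divergence. Your added remarks on the equality condition and on absolute continuity of the two conditionals are sensible refinements the paper leaves implicit, but the core derivation is the same.
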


\begin{proof}
We begin with the definition of the ground-truth confidence growth $C_k$, as defined in Eq. \ref{eq:C_k}:
$$
\begin{aligned}
C_k &= \log \pi_\theta(Y_{\mathrm{gt}} | q, H_{\le k}) - \log \pi_\theta(Y_{\mathrm{gt}} | q, H_{<k}) \\
&= \log \frac{\pi_\theta(Y_{\mathrm{gt}} | q, h_k, H_{<k})}{\pi_\theta(Y_{\mathrm{gt}} | q, H_{<k})}.
\end{aligned}
$$
Next, we apply Bayes' rule to the numerator, $\pi_\theta(Y_{\mathrm{gt}} | q, h_k, H_{<k})$:
$$
\pi_\theta(Y_{\mathrm{gt}} | q, h_k, H_{<k}) = \frac{\pi_\theta(h_k | q, Y_{\mathrm{gt}}, H_{<k}) \pi_\theta(Y_{\mathrm{gt}} | q, H_{<k})}{\pi_\theta(h_k | q, H_{<k})}.
$$
Substituting this back into the equation for $C_k$, the $\pi_\theta(Y_{\mathrm{gt}} | q, H_{<k})$ terms cancel, yielding:
$$
C_k = \log \frac{\pi_\theta(h_k | q, Y_{\mathrm{gt}}, H_{<k})}{\pi_\theta(h_k | q, H_{<k})}.
$$
Now, we take the expectation of $C_k$ with respect to the oracle policy, $\pi_\theta(\cdot | q, Y_{\mathrm{gt}}, H_{<k})$:
$$
\begin{aligned}
\mathbb{E}_{h_k \sim \pi_\theta(\cdot | q, Y_{\mathrm{gt}}, H_{<k})} [C_k] &= \sum_{h_k} \pi_\theta(h_k | q, Y_{\mathrm{gt}}, H_{<k}) \log \frac{\pi_\theta(h_k | q, Y_{\mathrm{gt}}, H_{<k})}{\pi_\theta(h_k | q, H_{<k})} \\
&= D_{KL}\Big( \pi_\theta(\cdot | q, Y_{\mathrm{gt}}, H_{<k}) \ || \ \pi_\theta(\cdot | q, H_{<k}) \Big).
\end{aligned}
$$
This expression is the Kullback-Leibler (KL) divergence between the probability distribution over the next reasoning step conditioned on the ground truth and the distribution without it. By the property of non-negativity of KL divergence, the proposition holds:
$$
D_{KL}(\cdot || \cdot) \ge 0 \quad \therefore \quad \mathbb{E}[C_k] \ge 0 \quad
$$
\end{proof}

\paragraph{Implication.}
This proof demonstrates that the expected confidence gain under an oracle policy is equivalent to the KL divergence between the ground-truth-conditioned policy and the standard policy. Since KL divergence is always non-negative, this provides the following theoretical guarantee: on average, a reasoning step consistent with the correct answer (i.e., faithful reasoning) will not decrease the model's confidence. \textit{This result validates the use of confidence growth as a strong inductive bias; encouraging the model to explore reasoning paths with non-decreasing confidence effectively constrains the search space to regions richer in logically sound reasoning.}

\section{Method: Progressively Ascending Confidence Reward (PACR)}
\label{method}
Based on our findings in Section \ref{observation}, we now formalize how to incorporate the principle of ascending ground-truth confidence into the GRPO framework. To do this, we introduce the Progressively Ascending Confidence Reward (PACR), a procedural reward signal designed to complement the final outcome-based reward. We propose \textit{two} variants: (1) \textcolor[RGB]{95, 164, 207}{\textbf{Sparse-PACR}}, which applies a holistic, trajectory-level reward based on the consistency of confidence growth, and (2) \textcolor[RGB]{231, 143, 129}{\textbf{Dense-PACR}}, which provides a fine-grained, step-wise reward based on the magnitude of each confidence change.

\paragraph{\textcolor[RGB]{95, 164, 207}{Sparse-PACR.}}
In the Sparse setting, we compute a single procedural reward for an entire trajectory based on the consistency of its confidence growth. This reward, $C^{(i)}_{\text{sparse}}$, is the proportion of reasoning steps that produce a positive confidence gain. We calculate it using an indicator function, $\mathbb{I}(\cdot)$:
\begin{equation}
C^{(i)}_{\text{sparse}} = \frac{1}{T_i} \sum_{k=1}^{T_i} \mathbb{I}\Big(C_k^{(i)} > 0\Big),
\label{eq:sparse_reward}
\end{equation}
where $C_k^{(i)}$ is the confidence gain in Eq.~\ref{eq:C_k}. The final reward for trajectory $i$, $R^{(i)}_{\text{sp-PACR}}$, is a weighted combination of the standard outcome-based reward, $R^{(i)}_{\text{GRPO}}$, and our sparse procedural reward:
\begin{equation}
R^{(i)}_{\text{sp-PACR}} = \lambda_1 \cdot R^{(i)}_{\text{GRPO}} + \lambda_2 \cdot C^{(i)}_{\text{sparse}}.
\end{equation}
This combined reward is then used to calculate the trajectory's advantage, $A^{(i)}_{\text{sp-PACR}}$, within the GRPO framework by centering it against the group average:
\begin{equation}
A^{(i)}_{\text{sp-PACR}} = R^{(i)}_{\text{sp-PACR}} - \text{mean}\big(\{R^{(j)}_{\text{sp-PACR}}\}_{j=1}^N\big).
\label{eq:sp-PACR_adv}
\end{equation}

\begin{figure}
\centering
    \includegraphics[width=1.0\textwidth]{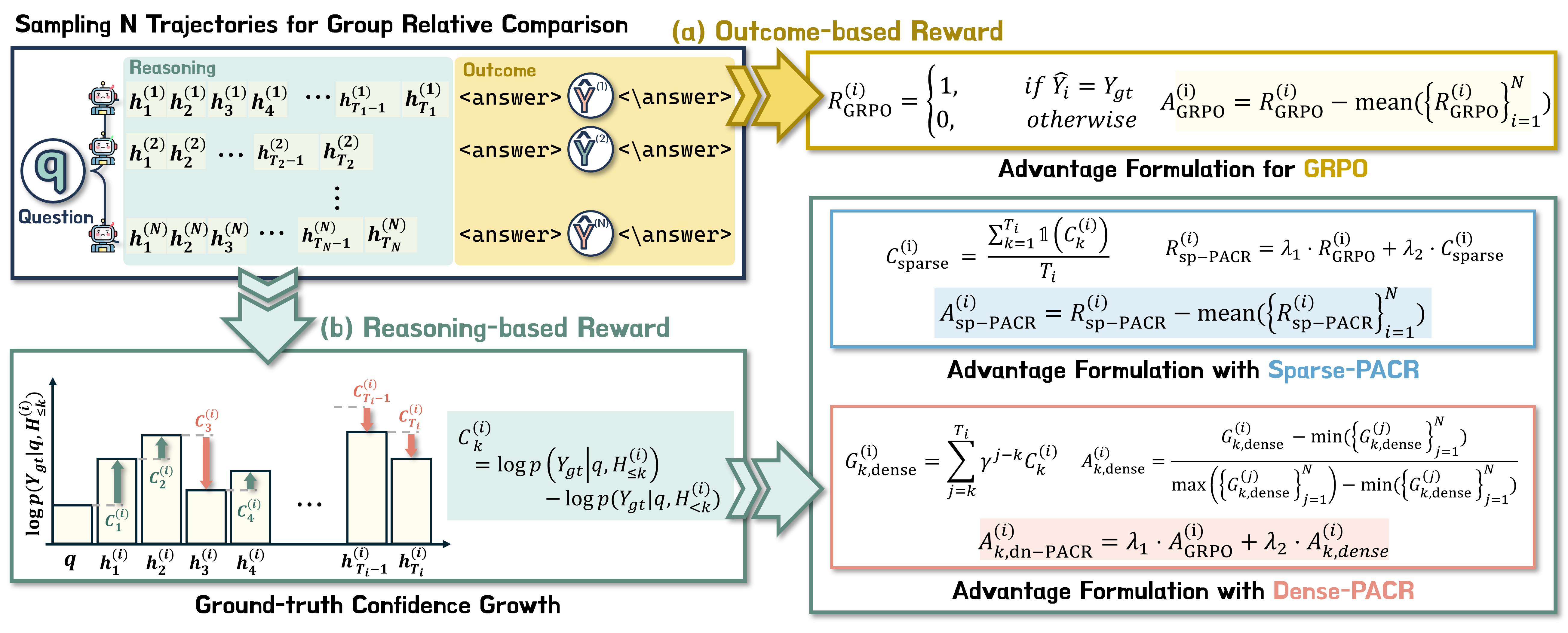}
    \vskip -0.1in
\caption{\textbf{Overview of the \textcolor[RGB]{94, 139, 126}{PACR} method and its integration with \textcolor[RGB]{160, 132, 11}{GRPO}.} Standard GRPO begins by sampling a group of $N$ reasoning trajectories for a given question. (a) A standard \textcolor[RGB]{160, 132, 11}{outcome-based reward} ($R^{(i)}_{\text{GRPO}}$) is calculated based on the correctness of the final answer. (b) Our proposed \textcolor[RGB]{94, 139, 126}{reasoning-based reward} is derived from the ground-truth confidence growth ($C_k^{(i)}$) at each step. This signal is integrated into the final advantage calculation in two ways: \textbf{\textcolor[RGB]{95, 164, 207}{Sparse-PACR}} uses the consistency of confidence growth to compute a single reward for the entire trajectory, while \textbf{\textcolor[RGB]{231, 143, 129}{Dense-PACR}} uses the magnitude of each step's gain to compute a fine-grained, per-step advantage.}

    \label{fig:method}
\end{figure}
\paragraph{\textcolor[RGB]{231, 143, 129}{Dense-PACR.}}
The Dense setting provides a more granular, step-wise reward signal. At each reasoning step $k$ in trajectory $i$, we use the ground-truth confidence gain, $C_k^{(i)}$, as an immediate reward. From this, we compute the discounted return for that step, $G^{(i)}_{k, \text{dense}}$, by summing the rewards from that point forward:
\begin{equation}
G^{(i)}_{k, \text{dense}} = \sum_{j=k}^{T_i} \gamma^{j-k} C^{(i)}_{j},
\end{equation}
where $\gamma$ is a discount factor. To create a stable, step-wise advantage signal, $A^{(i)}_{k, \text{dense}}$, we normalize these returns across the group at each step $k$. Specifically, we use Min-Max scaling to map the returns to a $[0, 1]$ range. This creates a purely positive signal that only incentivizes confidence growth without penalizing steps that do not, a design choice we validate in our ablations (Section \ref{sec:loo_ablation}). To handle trajectories of varying lengths, the discounted return $G^{(i)}_{k, \text{dense}}$ is treated as zero for any step $k$ that does not exist in trajectory $i$. The resulting advantage for a step $k$ in trajectory $i$ is then:

\begin{equation}
\label{eq: A_minmax}
A^{(i)}_{k, \text{dense}} = \frac{G^{(i)}_{k, \text{dense}} - \min_{j}(\{G^{(j)}_{k, \text{dense}}\}_{j=1}^N)}{\max_{j}(\{G^{(j)}_{k, \text{dense}}\}_{j=1}^N) - \min_{j}(\{G^{(j)}_{k, \text{dense}}\}_{j=1}^N)}.
\end{equation}
Finally, the total advantage at each step, $A^{(i)}_{k, \text{dn-PACR}}$, is the weighted sum of the trajectory-level GRPO advantage and our dense, step-wise advantage:
\begin{equation}
A^{(i)}_{k, \text{dn-PACR}} = \lambda_1 \cdot A^{(i)}_{\text{GRPO}} + \lambda_2 \cdot A^{(i)}_{k, \text{dense}},
\label{eq:dn-PACR_adv}
\end{equation}
where $A^{(i)}_{\text{GRPO}} = R^{(i)}_{\text{GRPO}} - \text{mean}(\{R^{(j)}_{\text{GRPO}}\}_{j=1}^N)$. This final advantage is then used to update the policy.

\label{sec:training}
\section{Experimental Setup}
\paragraph{Models and Baselines.}
We experiment with three open-source LLMs: Qwen2.5-Math-1.5B, Qwen2.5-Math-7B \citep{model_qwen_math}, and Qwen3-4B\footnote{For the Qwen3-4B model, we set `enable\_thinking=False' to disable its built-in chain-of-thought capabilities, allowing for a direct comparison of how our method versus standard GRPO teaches this capability.} \citep{model_qwen3}. Our baseline for all experiments is Dr.GRPO \citep{drgrpo}, a bias-mitigated version of GRPO \citep{grpo}, which we implement using the OAT framework \citep{oat}. We compare this baseline against our two proposed methods, Sparse-PACR and Dense-PACR.

\paragraph{Datasets and Evaluation.}
For training, we use the MATH dataset \citep{data_math}. Following prior work \citep{drgrpo}, we use the full dataset for the 1.5B model and filter for the more challenging levels (3-5) for the 4B and 7B models. To evaluate performance, we test our models on five diverse mathematical reasoning benchmarks: MATH500 \citep{data_math}, Minerva-Math \citep{data_minerva}, OlympiadBench \citep{data_olympiadbench}, AIME 2024, and AMC 2023 \citep{data_numinamath}. Final answers are programmatically checked for correctness using the Math-Verify \citep{mathverify} library. All results are reported as pass@1 using greedy decoding (temperature of 0).

\paragraph{Training Details.}
For each problem, we generate a group of 8 responses using sampling with a temperature of 1.0.
We report the average results across three runs with different random seeds for all experiments. All models were trained on a single node with 8 $\times$ NVIDIA H100 80GB GPUs. Further details on hyperparameters, such as learning rate and batch size, are provided in Appendix \ref{appendix:training_detail}.

\section{Results and Ablations}
\label{sec:result}

\subsection{Experimental Result}

\begin{table}[h]
    \centering
    \caption{\textbf{Results on reasoning benchmarks.} We report pass@1 accuracy across five datasets. Both Sparse-PACR and Dense-PACR consistently outperform the strong Dr.GRPO baseline across all model sizes. $\dagger$ is marked for the score reproduced and other baseline scores are from \cite{drgrpo}. The colored numbers indicate the absolute performance change relative to the Dr.GRPO baseline, with \textcolor[RGB]{50, 180, 100}{green} for improvements and \textcolor[RGB]{180, 0, 100}{red} for degradations.}
    \vskip -0.1in
    \resizebox{1.0\textwidth}{!}{
    \begin{tabular}{l|ccccc||c}
        \Xhline{4\arrayrulewidth}
       \textbf{Base model + Method}   & AIME24 & AMC & MATH500 & Minerva & OlympiadBench & \textbf{Average} \\
       \Xhline{2\arrayrulewidth}
        \rowcolor{table-yellow!60} Qwen2.5-Math-1.5B &20.0 & 32.5 & 33.0 & 12.5 & 22.8 &  24.2 \\
        R1-Distill-Qwen-1.5B @ 3k & 2.5 & 21.7 & 52.2 & 16.3 & 17.3 & 22.0 \\
        Qwen2.5-Math-1.5B-Instruct & 10.0 & 48.2 & 74.2 & 26.5 & 40.2 & 39.8 \\
       Qwen2.5-Math-1.5B + Dr.GRPO $\dagger$ & 13.3 & 47.0 & 76.8 & 32.3 & 39.0 & 41.7 \\
       \hdashline
        \rowcolor{table-green!20} Qwen2.5-Math-1.5B + Sparse-PACR & \hspace{0.63cm} 20.0 \textcolor[RGB]{0, 180, 100}{$_{\texttt{+6.7}}$} 
			& \hspace{0.63cm} 48.4  \textcolor[RGB]{0, 180, 100}{$_{\texttt{+1.4}}$} 
			& \hspace{0.63cm} 77.4  \textcolor[RGB]{0, 180, 100}{$_{\texttt{+0.6}}$}
			& \hspace{0.63cm} 29.4 \textcolor[RGB]{180, 0, 100}{$_{\texttt{-2.9}}$} 
			& \hspace{0.63cm} 37.8  \textcolor[RGB]{180, 0, 100}{$_{\texttt{-1.2}}$} 
			& \hspace{0.63cm} 42.6 \textcolor[RGB]{0, 180, 100}{$_{\texttt{+0.9}}$}  \\

        \rowcolor{table-green!40} Qwen2.5-Math-1.5B + Dense-PACR& \hspace{0.63cm} 23.3 \textcolor[RGB]{0, 180, 100}{$_{\texttt{+10.0}}$} 
			& \hspace{0.63cm} 49.4  \textcolor[RGB]{0, 180, 100}{$_{\texttt{+2.4}}$} 
			& \hspace{0.63cm} 77.4  \textcolor[RGB]{0, 180, 100}{$_{\texttt{+0.6}}$} 
			& \hspace{0.63cm} 31.7  \textcolor[RGB]{180, 0, 100}{$_{\texttt{-0.6}}$} 
			& \hspace{0.63cm} 39.0  \textcolor[RGB]{100, 100, 100}{$_{\texttt{0.0}}$} 
			& \hspace{0.63cm} 44.2 \textcolor[RGB]{0, 180, 100}{$_{\texttt{+2.5}}$}  \\

        \Xhline{3\arrayrulewidth}
        \rowcolor{table-yellow!60} Qwen2.5-Math-7B & 16.7 & 38.6 & 50.6 & 9.9 & 16.6 & 26.5 \\
        SimpleRL-Zero-7B & 26.7 & 60.2 & 78.2 & 27.6 & 40.3 & 46.6 \\
        PRIME-Zero-7B & 16.7 & 62.7 & 83.8 & 36.0 & 40.9 & 48.0 \\
        OpenReasoner-Zero-7B @ 3k & 13.3 & 47.0 & 79.2 & 31.6 & 44.0 & 43.0 \\
        R1-Distill-Qwen-7B @ 3k & 10.0 & 26.2 & 60.1 & 23.0 & 23.1 & 28.5 \\
        Qwen2.5-Math-7B-Instruct & 16.7 & 53.0 & 83.6 & 29.8 & 42.7 & 45.1 \\
        Qwen2.5-Math-7B + Dr.GRPO  $\dagger$ & 30.0 & 56.6 & 81.8 & 34.6 & 45.2 & 49.6 \\
        \hdashline
        \rowcolor{table-green!20} Qwen2.5-Math-7B + Sparse-PACR & \hspace{0.63cm} 36.7 \textcolor[RGB]{0, 180, 100}{$_{\texttt{+6.7}}$} 
			& \hspace{0.63cm} 55.4  \textcolor[RGB]{180, 0, 100}{$_{\texttt{-1.2}}$} 
			& \hspace{0.63cm} 82.6  \textcolor[RGB]{0, 180, 100}{$_{\texttt{+0.8}}$} 
			& \hspace{0.63cm} 34.6  \textcolor[RGB]{100, 100, 100}{$_{\texttt{0.0}}$} 
			& \hspace{0.63cm} 45.6  \textcolor[RGB]{0, 180, 100}{$_{\texttt{+0.4}}$} 
			& \hspace{0.63cm} 51.0 \textcolor[RGB]{0, 180, 100}{$_{\texttt{+1.4}}$}  \\

        \rowcolor{table-green!40} Qwen2.5-Math-7B + Dense-PACR & \hspace{0.63cm} 43.3 \textcolor[RGB]{0, 180, 100}{$_{\texttt{+13.3}}$} 
			& \hspace{0.63cm} 56.1  \textcolor[RGB]{180, 0, 100}{$_{\texttt{-0.5}}$} 
			& \hspace{0.63cm} 81.9  \textcolor[RGB]{0, 180, 100}{$_{\texttt{+0.1}}$} 
			& \hspace{0.63cm} 35.6  \textcolor[RGB]{0, 180, 100}{$_{\texttt{+1.0}}$} 
			& \hspace{0.63cm} 46.1  \textcolor[RGB]{0, 180, 100}{$_{\texttt{+0.9}}$} 
			& \hspace{0.63cm} 52.6 \textcolor[RGB]{0, 180, 100}{$_{\texttt{+3.0}}$}  \\

        \Xhline{3\arrayrulewidth}
        \rowcolor{table-yellow!60} Qwen3-4B & 13.3 & 32.5 & 40.2 & 9.19 & 39.4 & 26.9 \\
        Qwen3-4B + Dr.GRPO $\dagger$ & 40.0 & 63.8 & 88.4 & 33.8 & 46.8 & 54.6 \\
        \hdashline
        \rowcolor{table-green!20} Qwen3-4B + Sparse-PACR & \hspace{0.63cm}33.3\textcolor[RGB]{180, 0, 100}{$_{\texttt{-6.7}}$} & \hspace{0.63cm}67.5\textcolor[RGB]{0, 180, 100}{$_{\texttt{+3.7}}$} & \hspace{0.63cm}86.2\textcolor[RGB]{180, 0, 100}{$_{\texttt{-2.2}}$} & \hspace{0.63cm}35.3\textcolor[RGB]{0, 180, 100}{$_{\texttt{+1.5}}$} & \hspace{0.63cm}54.4\textcolor[RGB]{0, 180, 100}{$_{\texttt{+7.6}}$} & \hspace{0.63cm}55.3\textcolor[RGB]{0, 180, 100}{$_{\texttt{+0.7}}$} \\
        \rowcolor{table-green!40} Qwen3-4B + Dense-PACR & \hspace{0.63cm}46.7\textcolor[RGB]{0, 180, 100}{$_{\texttt{+6.7}}$} & \hspace{0.63cm}63.4\textcolor[RGB]{180, 0, 100}{$_{\texttt{-0.4}}$} & \hspace{0.63cm}86.8\textcolor[RGB]{180, 0, 100}{$_{\texttt{-1.6}}$} & \hspace{0.63cm}36.0\textcolor[RGB]{0, 180, 100}{$_{\texttt{+2.2}}$} & \hspace{0.63cm}55.0\textcolor[RGB]{0, 180, 100}{$_{\texttt{+8.2}}$} & \hspace{0.63cm}57.6\textcolor[RGB]{0, 180, 100}{$_{\texttt{+3.0}}$} \\
        \Xhline{4\arrayrulewidth}
    \end{tabular}
    }
    \label{tab:main_result}
\end{table}

Table \ref{tab:main_result} presents the quantitative results on various math benchmarks. For the Qwen2.5-series, we also include the instruct models at the sample scale and R1-Distill models for comparison by following \citep{drgrpo}. Our proposed reward, PACR, demonstrates significant improvements over the outcome-based reward baseline (+Dr.GRPO) in both its Sparse and Dense setting. This shows that our core method provides a positive inductive bias for improving the reasoning skills of language models.

While the sparse trajectory-level reward, Sparse-PACR, is effective on its own, we observe that Dense-PACR, which provides a more fine-grained reward, consistently achieves better performance. This highlights that enriching the training process with a dense reward signal allows the model to learn from more detailed feedback, leading to further gains in its reasoning capabilities.

\subsection{Training Curve: PACR Accelerates Exploration and Improves Convergence}
\begin{wrapfigure}{r}{0.6\textwidth}
    \vskip -0.2in
    \centering    
\includegraphics[width=0.6\textwidth]{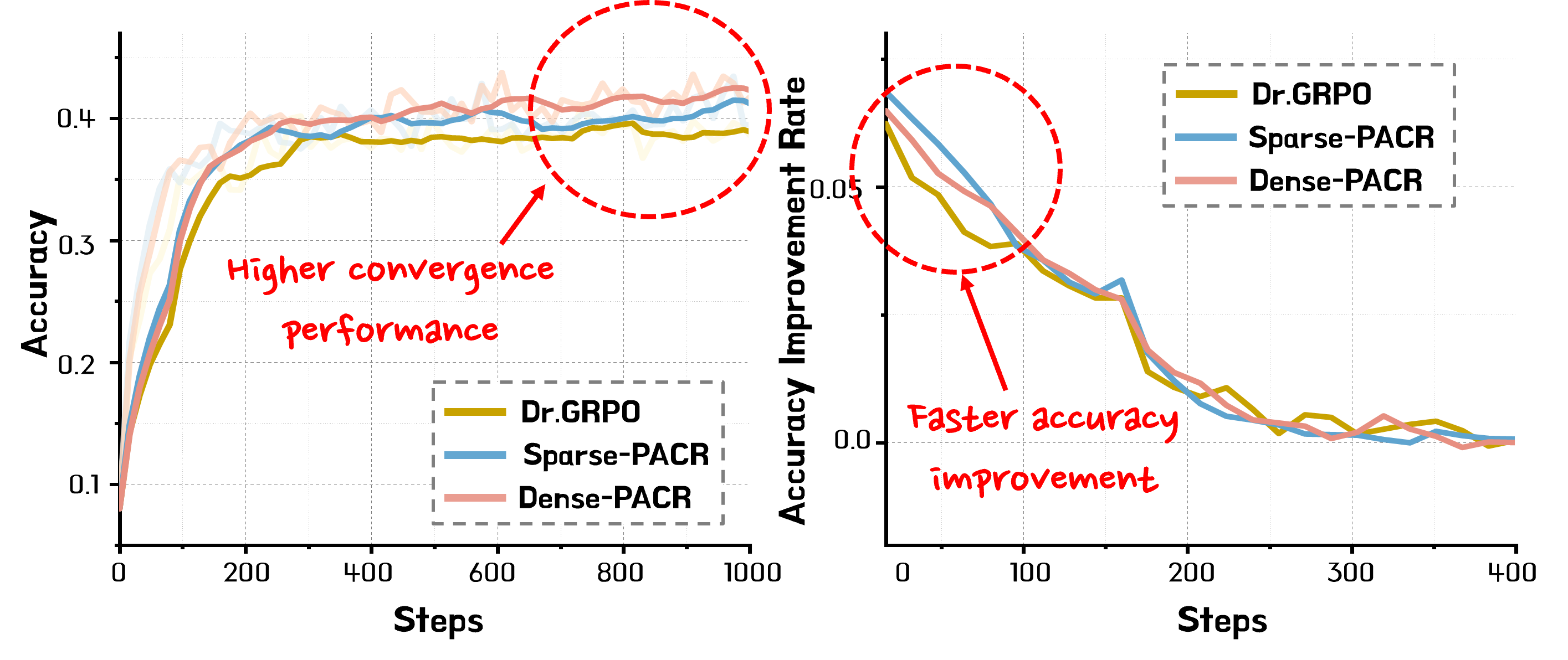}
    \vskip -0.1in
\caption{\textbf{Training dynamics for Qwen2.5-Math-1.5B.} Average pass@1 accuracy (left) and rate of accuracy improvement (right) during training. PACR methods show a faster initial rate of improvement, accelerating exploration and converging to a higher final performance.}
    \label{fig:acc_curve}
    \vskip -0.2in
\end{wrapfigure} 
Figure \ref{fig:acc_curve} illustrates the training dynamics, plotting the average pass@1 accuracy over training steps (left) and the corresponding rate of accuracy improvement (right). The right plot highlights that both PACR variants have a significantly higher rate of improvement compared to the Dr.GRPO baseline, especially during the critical early exploration phase of RL training. As shown on the left, this accelerated learning ultimately allows the PACR methods to converge to a higher final accuracy.

\subsection{Analysis on Advantage Formulation: Impact of Penalizing Intermediate Steps}
\label{sec:loo_ablation}
\begin{wrapfigure}{r}{0.40\textwidth}
    \vskip -0.3in
    \centering    
    \includegraphics[width=0.35\textwidth]{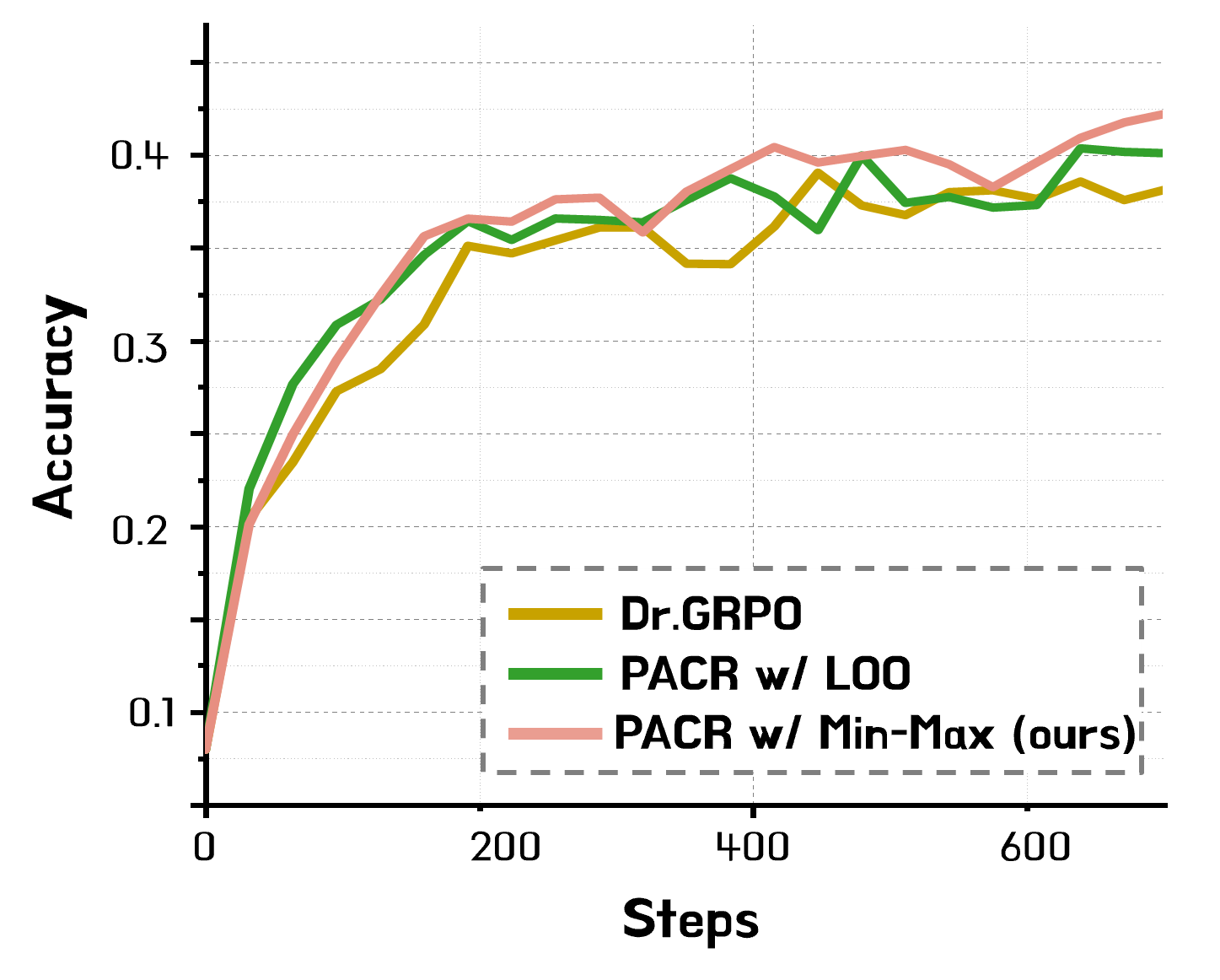}
    \vskip -0.1in
\caption{\textbf{Advantage Normalization.} Comparison of Min-Max normalization and a Leave-One-Out (LOO) baseline for Dense-PACR.}
    \label{fig:acc_curve_analysis}
    \vskip -0.3in
\end{wrapfigure} 

In this section, we analyze the impact of the advantage formulation in our Dense-PACR setting. A crucial design choice is how to normalize the raw discounted returns ($G^{(i)}_{k, \text{dense}}$) into a stable advantage signal. We compare our Min-Max normalization against a widely used Leave-One-Out (LOO) baseline \citep{rloo, prime}.

The key difference is that the LOO baseline centers the returns, which can assign \textbf{negative advantages} that penalize steps with below-average confidence gains:
\begin{equation}
A_{k, \text{loo}}^{(i)} = G^{(i)}_{k, \text{dense}} - \text{mean}(\{G_{k, \text{dense}}^{(j)}\}_{j=1, j\neq i}^N).
\end{equation}

In contrast, our Min-Max normalization (Eq. \ref{eq: A_minmax}) scales returns to a $[0, 1]$ range, creating a \textbf{purely positive signal} for the reasoning process that only rewards confidence growth.

Figure \ref{fig:acc_curve_analysis} shows this design choice has a clear impact on the training dynamics. The penalizing nature of the LOO baseline initially accelerates learning by aggressively pruning suboptimal steps, but this leads to premature convergence and a performance plateau. Conversely, our non-penalizing Min-Max approach encourages more sustained exploration, ultimately converging to a higher final accuracy. With our method, process-level penalization is avoided; a negative training signal is only applied by the main GRPO reward when the model produces a definitively incorrect final answer.

\section{Conclusion}
In this work, we addressed the limitations of sparse, outcome-based rewards in RLVR by introducing the Progressively Ascending Confidence Reward (PACR), a dense, model-intrinsic signal derived from the model's evolving belief in the ground-truth answer. Through a series of empirical observations and a formal theoretical proof, we validated that confidence growth serves as a powerful inductive bias, effectively constraining the search space to regions richer in logically sound and faithful reasoning paths. Our experiments demonstrated that augmenting GRPO with PACR not only accelerates training but also converges to a higher final performance across multiple reasoning benchmarks, with the fine-grained Dense-PACR variant proving most effective. Ultimately, our work shows that informative, dense rewards for complex reasoning can be effectively extracted from the internal dynamics of the learning policy itself, suggesting a promising direction for creating more effective and reliable methods for fine-tuning the reasoning capabilities of large language models.
\bibliography{main}

\begin{thebibliography}{44}
\providecommand{\natexlab}[1]{#1}
\providecommand{\url}[1]{\texttt{#1}}
\expandafter\ifx\csname urlstyle\endcsname\relax
  \providecommand{\doi}[1]{doi: #1}\else
  \providecommand{\doi}{doi: \begingroup \urlstyle{rm}\Url}\fi

\bibitem[Ahmadian et~al.(2024)Ahmadian, Cremer, Gall{\'e}, Fadaee, Kreutzer, Pietquin, {\"U}st{\"u}n, and Hooker]{rloo}
Arash Ahmadian, Chris Cremer, Matthias Gall{\'e}, Marzieh Fadaee, Julia Kreutzer, Olivier Pietquin, Ahmet {\"U}st{\"u}n, and Sara Hooker.
\newblock Back to basics: Revisiting reinforce-style optimization for learning from human feedback in llms.
\newblock In \emph{Proceedings of the 62nd Annual Meeting of the Association for Computational Linguistics (Volume 1: Long Papers)}, pp.\  12248--12267, 2024.

\bibitem[Bai et~al.(2022)Bai, Jones, Ndousse, Askell, Chen, DasSarma, Drain, Fort, Ganguli, Henighan, et~al.]{hhrlhf}
Yuntao Bai, Andy Jones, Kamal Ndousse, Amanda Askell, Anna Chen, Nova DasSarma, Dawn Drain, Stanislav Fort, Deep Ganguli, Tom Henighan, et~al.
\newblock Training a helpful and harmless assistant with reinforcement learning from human feedback.
\newblock \emph{arXiv preprint arXiv:2204.05862}, 2022.

\bibitem[Cao et~al.(2024)Cao, Shu, Yu, Zhu, Wichers, Liu, and Meng]{drlc}
Meng Cao, Lei Shu, Lei Yu, Yun Zhu, Nevan Wichers, Yinxiao Liu, and Lei Meng.
\newblock Drlc: Reinforcement learning with dense rewards from llm critic, 2024.

\bibitem[Chan et~al.(2024)Chan, Sun, Holt, and Van Der~Schaar]{abc}
Alex~J Chan, Hao Sun, Samuel Holt, and Mihaela Van Der~Schaar.
\newblock Dense reward for free in reinforcement learning from human feedback.
\newblock In \emph{Proceedings of the 41st International Conference on Machine Learning}, pp.\  6136--6154, 2024.

\bibitem[Cheng et~al.(2025)Cheng, Qiao, Li, Guo, Wang, Xiong, Lv, and Wang]{prm2}
Jie Cheng, Ruixi Qiao, Lijun Li, Chao Guo, Junle Wang, Gang Xiong, Yisheng Lv, and Fei-Yue Wang.
\newblock Stop summation: Min-form credit assignment is all process reward model needs for reasoning.
\newblock \emph{arXiv preprint arXiv:2504.15275}, 2025.

\bibitem[Comanici et~al.(2025)Comanici, Bieber, Schaekermann, Pasupat, Sachdeva, Dhillon, Blistein, Ram, Zhang, Rosen, et~al.]{gemini2.5}
Gheorghe Comanici, Eric Bieber, Mike Schaekermann, Ice Pasupat, Noveen Sachdeva, Inderjit Dhillon, Marcel Blistein, Ori Ram, Dan Zhang, Evan Rosen, et~al.
\newblock Gemini 2.5: Pushing the frontier with advanced reasoning, multimodality, long context, and next generation agentic capabilities.
\newblock \emph{arXiv preprint arXiv:2507.06261}, 2025.

\bibitem[Cui et~al.(2025)Cui, Yuan, Wang, Wang, Li, He, Fan, Yu, Xu, Chen, et~al.]{prime}
Ganqu Cui, Lifan Yuan, Zefan Wang, Hanbin Wang, Wendi Li, Bingxiang He, Yuchen Fan, Tianyu Yu, Qixin Xu, Weize Chen, et~al.
\newblock Process reinforcement through implicit rewards.
\newblock \emph{arXiv preprint arXiv:2502.01456}, 2025.

\bibitem[Guo et~al.(2025)Guo, Yang, Zhang, Song, Zhang, Xu, Zhu, Ma, Wang, Bi, et~al.]{deepseek-r1}
Daya Guo, Dejian Yang, Haowei Zhang, Junxiao Song, Ruoyu Zhang, Runxin Xu, Qihao Zhu, Shirong Ma, Peiyi Wang, Xiao Bi, et~al.
\newblock Deepseek-r1: Incentivizing reasoning capability in llms via reinforcement learning.
\newblock \emph{arXiv preprint arXiv:2501.12948}, 2025.

\bibitem[Hale(2001)]{psycholinguistic1}
John Hale.
\newblock A probabilistic earley parser as a psycholinguistic model.
\newblock In \emph{North American Chapter of the Association for Computational Linguistics}, 2001.
\newblock URL \url{https://api.semanticscholar.org/CorpusID:5490143}.

\bibitem[He et~al.(2024)He, Luo, Bai, Hu, Thai, Shen, Hu, Han, Huang, Zhang, et~al.]{data_olympiadbench}
Chaoqun He, Renjie Luo, Yuzhuo Bai, Shengding Hu, Zhen Thai, Junhao Shen, Jinyi Hu, Xu~Han, Yujie Huang, Yuxiang Zhang, et~al.
\newblock Olympiadbench: A challenging benchmark for promoting agi with olympiad-level bilingual multimodal scientific problems.
\newblock In \emph{Proceedings of the 62nd Annual Meeting of the Association for Computational Linguistics (Volume 1: Long Papers)}, pp.\  3828--3850, 2024.

\bibitem[Hendrycks et~al.()Hendrycks, Burns, Kadavath, Arora, Basart, Tang, Song, and Steinhardt]{data_math}
Dan Hendrycks, Collin Burns, Saurav Kadavath, Akul Arora, Steven Basart, Eric Tang, Dawn Song, and Jacob Steinhardt.
\newblock Measuring mathematical problem solving with the math dataset.
\newblock \emph{Sort}, 2\penalty0 (4):\penalty0 0--6.

\bibitem[Hu et~al.(2025)Hu, Zhang, Han, Jiang, Zhang, and Shum]{openreasoner}
Jingcheng Hu, Yinmin Zhang, Qi~Han, Daxin Jiang, Xiangyu Zhang, and Heung-Yeung Shum.
\newblock Open-reasoner-zero: An open source approach to scaling up reinforcement learning on the base model.
\newblock \emph{arXiv preprint arXiv:2503.24290}, 2025.

\bibitem[Kumar et~al.()Kumar, Zhuang, Agarwal, Su, Co-Reyes, Singh, Baumli, Iqbal, Bishop, Roelofs, et~al.]{self_correction}
Aviral Kumar, Vincent Zhuang, Rishabh Agarwal, Yi~Su, John~D Co-Reyes, Avi Singh, Kate Baumli, Shariq Iqbal, Colton Bishop, Rebecca Roelofs, et~al.
\newblock Training language models to self-correct via reinforcement learning.
\newblock In \emph{The Thirteenth International Conference on Learning Representations}.

\bibitem[Kydlíček(2025)]{mathverify}
Hynek Kydlíček.
\newblock {Math-Verify: Math Verification Library}, 2025.
\newblock URL \url{https://github.com/huggingface/math-verify}.

\bibitem[Levy(2008)]{psycholinguistic2}
Roger Levy.
\newblock Expectation-based syntactic comprehension.
\newblock \emph{Cognition}, 106\penalty0 (3):\penalty0 1126--1177, 2008.
\newblock ISSN 0010-0277.
\newblock \doi{https://doi.org/10.1016/j.cognition.2007.05.006}.
\newblock URL \url{https://www.sciencedirect.com/science/article/pii/S0010027707001436}.

\bibitem[Lewkowycz et~al.(2022)Lewkowycz, Andreassen, Dohan, Dyer, Michalewski, Ramasesh, Slone, Anil, Schlag, Gutman-Solo, et~al.]{data_minerva}
Aitor Lewkowycz, Anders Andreassen, David Dohan, Ethan Dyer, Henryk Michalewski, Vinay Ramasesh, Ambrose Slone, Cem Anil, Imanol Schlag, Theo Gutman-Solo, et~al.
\newblock Solving quantitative reasoning problems with language models.
\newblock \emph{Advances in neural information processing systems}, 35:\penalty0 3843--3857, 2022.

\bibitem[Li et~al.(2024{\natexlab{a}})Li, Beeching, Tunstall, Lipkin, Soletskyi, Huang, Rasul, Yu, Jiang, Shen, et~al.]{data_numinamath}
Jia Li, Edward Beeching, Lewis Tunstall, Ben Lipkin, Roman Soletskyi, Shengyi Huang, Kashif Rasul, Longhui Yu, Albert~Q Jiang, Ziju Shen, et~al.
\newblock Numinamath: The largest public dataset in ai4maths with 860k pairs of competition math problems and solutions.
\newblock \emph{Hugging Face repository}, 13\penalty0 (9):\penalty0 9, 2024{\natexlab{a}}.

\bibitem[Li \& Li(2025)Li and Li]{prm_qvalue}
Wendi Li and Yixuan Li.
\newblock Process reward model with q-value rankings.
\newblock In \emph{The Thirteenth International Conference on Learning Representations}, 2025.
\newblock URL \url{https://openreview.net/forum?id=wQEdh2cgEk}.

\bibitem[Li et~al.(2024{\natexlab{b}})Li, Xu, Zhang, Lin, Yu, Sun, and Luo]{remax}
Ziniu Li, Tian Xu, Yushun Zhang, Zhihang Lin, Yang Yu, Ruoyu Sun, and Zhi-Quan Luo.
\newblock Remax: a simple, effective, and efficient reinforcement learning method for aligning large language models.
\newblock In \emph{Proceedings of the 41st International Conference on Machine Learning}, pp.\  29128--29163, 2024{\natexlab{b}}.

\bibitem[Liu et~al.()Liu, Bai, Lu, Sun, Kong, Wang, Shan, Jose, Liu, Wen, et~al.]{tis-dpo}
Aiwei Liu, Haoping Bai, Zhiyun Lu, Yanchao Sun, Xiang Kong, Xiaoming~Simon Wang, Jiulong Shan, Albin~Madappally Jose, Xiaojiang Liu, Lijie Wen, et~al.
\newblock Tis-dpo: Token-level importance sampling for direct preference optimization with estimated weights.
\newblock In \emph{The Thirteenth International Conference on Learning Representations}.

\bibitem[Liu et~al.(2024)Liu, Chen, Wan, Du, Lee, and Lin]{oat}
Zichen Liu, Changyu Chen, Xinyi Wan, Chao Du, Wee~Sun Lee, and Min Lin.
\newblock Oat: A research-friendly framework for llm online alignment, 2024.

\bibitem[Liu et~al.(2025)Liu, Chen, Li, Qi, Pang, Du, Lee, and Lin]{drgrpo}
Zichen Liu, Changyu Chen, Wenjun Li, Penghui Qi, Tianyu Pang, Chao Du, Wee~Sun Lee, and Min Lin.
\newblock Understanding r1-zero-like training: A critical perspective.
\newblock \emph{arXiv preprint arXiv:2503.20783}, 2025.

\bibitem[Loshchilov \& Hutter(2017)Loshchilov and Hutter]{adamw}
Ilya Loshchilov and Frank Hutter.
\newblock Decoupled weight decay regularization.
\newblock In \emph{International Conference on Learning Representations}, 2017.

\bibitem[OpenAI(2025)]{gpt-oss-20b}
OpenAI.
\newblock gpt-oss-120b \& gpt-oss-20b model card, 2025.
\newblock URL \url{https://arxiv.org/abs/2508.10925}.

\bibitem[Ouyang et~al.(2022)Ouyang, Wu, Jiang, Almeida, Wainwright, Mishkin, Zhang, Agarwal, Slama, Ray, et~al.]{rlhf}
Long Ouyang, Jeffrey Wu, Xu~Jiang, Diogo Almeida, Carroll Wainwright, Pamela Mishkin, Chong Zhang, Sandhini Agarwal, Katarina Slama, Alex Ray, et~al.
\newblock Training language models to follow instructions with human feedback.
\newblock \emph{Advances in neural information processing systems}, 35:\penalty0 27730--27744, 2022.

\bibitem[Rafailov et~al.(2023)Rafailov, Sharma, Mitchell, Manning, Ermon, and Finn]{DPO}
Rafael Rafailov, Archit Sharma, Eric Mitchell, Christopher~D Manning, Stefano Ermon, and Chelsea Finn.
\newblock Direct preference optimization: Your language model is secretly a reward model.
\newblock \emph{Advances in neural information processing systems}, 36:\penalty0 53728--53741, 2023.

\bibitem[Rafailov et~al.(2024)Rafailov, Hejna, Park, and Finn]{densedpo}
Rafael Rafailov, Joey Hejna, Ryan Park, and Chelsea Finn.
\newblock From \$r\$ to \$q{\textasciicircum}*\$: Your language model is secretly a q-function.
\newblock In \emph{First Conference on Language Modeling}, 2024.
\newblock URL \url{https://openreview.net/forum?id=kEVcNxtqXk}.

\bibitem[Schulman et~al.(2017)Schulman, Wolski, Dhariwal, Radford, and Klimov]{ppo}
John Schulman, Filip Wolski, Prafulla Dhariwal, Alec Radford, and Oleg Klimov.
\newblock Proximal policy optimization algorithms.
\newblock \emph{arXiv preprint arXiv:1707.06347}, 2017.

\bibitem[Shao et~al.(2024{\natexlab{a}})Shao, Wang, Zhu, Xu, Song, Bi, Zhang, Zhang, Li, Wu, et~al.]{deepseek_math}
Zhihong Shao, Peiyi Wang, Qihao Zhu, Runxin Xu, Junxiao Song, Xiao Bi, Haowei Zhang, Mingchuan Zhang, YK~Li, Yang Wu, et~al.
\newblock Deepseekmath: Pushing the limits of mathematical reasoning in open language models.
\newblock \emph{arXiv preprint arXiv:2402.03300}, 2024{\natexlab{a}}.

\bibitem[Shao et~al.(2024{\natexlab{b}})Shao, Wang, Zhu, Xu, Song, Bi, Zhang, Zhang, Li, Wu, et~al.]{grpo}
Zhihong Shao, Peiyi Wang, Qihao Zhu, Runxin Xu, Junxiao Song, Xiao Bi, Haowei Zhang, Mingchuan Zhang, YK~Li, Yang Wu, et~al.
\newblock Deepseekmath: Pushing the limits of mathematical reasoning in open language models.
\newblock \emph{arXiv preprint arXiv:2402.03300}, 2024{\natexlab{b}}.

\bibitem[Team(2025)]{qwq32b}
Qwen Team.
\newblock Qwq-32b: Embracing the power of reinforcement learning, March 2025.
\newblock URL \url{https://qwenlm.github.io/blog/qwq-32b/}.

\bibitem[Wu et~al.(2023)Wu, Hu, Shi, Dziri, Suhr, Ammanabrolu, Smith, Ostendorf, and Hajishirzi]{finegrained}
Zeqiu Wu, Yushi Hu, Weijia Shi, Nouha Dziri, Alane Suhr, Prithviraj Ammanabrolu, Noah~A. Smith, Mari Ostendorf, and Hannaneh Hajishirzi.
\newblock Fine-grained human feedback gives better rewards for language model training.
\newblock In \emph{Thirty-seventh Conference on Neural Information Processing Systems}, 2023.
\newblock URL \url{https://openreview.net/forum?id=CSbGXyCswu}.

\bibitem[Yang et~al.(2024)Yang, Zhang, Hui, Gao, Yu, Li, Liu, Tu, Zhou, Lin, et~al.]{model_qwen_math}
An~Yang, Beichen Zhang, Binyuan Hui, Bofei Gao, Bowen Yu, Chengpeng Li, Dayiheng Liu, Jianhong Tu, Jingren Zhou, Junyang Lin, et~al.
\newblock Qwen2. 5-math technical report: Toward mathematical expert model via self-improvement.
\newblock \emph{arXiv preprint arXiv:2409.12122}, 2024.

\bibitem[Yang et~al.(2025{\natexlab{a}})Yang, Li, Yang, Zhang, Hui, Zheng, Yu, Gao, Huang, Lv, et~al.]{model_qwen3}
An~Yang, Anfeng Li, Baosong Yang, Beichen Zhang, Binyuan Hui, Bo~Zheng, Bowen Yu, Chang Gao, Chengen Huang, Chenxu Lv, et~al.
\newblock Qwen3 technical report.
\newblock \emph{arXiv preprint arXiv:2505.09388}, 2025{\natexlab{a}}.

\bibitem[Yang et~al.(2025{\natexlab{b}})Yang, He, Shi, Li, Yin, Deng, and Jiang]{segmenting1}
Zhaohui Yang, Chenghua He, Xiaowen Shi, Linjing Li, Qiyue Yin, Shihong Deng, and Daxin Jiang.
\newblock Beyond the first error: Process reward models for reflective mathematical reasoning.
\newblock \emph{arXiv preprint arXiv:2505.14391}, 2025{\natexlab{b}}.

\bibitem[Yoon et~al.(2024)Yoon, Yoon, Eom, Han, Nam, Jo, On, Hasegawa-Johnson, Kim, and Yoo]{tlcr}
Eunseop Yoon, Hee~Suk Yoon, Soo~Hwan Eom, Gunsoo Han, Daniel~Wontae Nam, Daejin Jo, Kyoung~Woon On, Mark Hasegawa-Johnson, Sungwoong Kim, and Chang~D Yoo.
\newblock Tlcr: Token-level continuous reward for fine-grained reinforcement learning from human feedback.
\newblock In \emph{Findings of the 62nd Annual Meeting of the Association for Computational Linguistics, ACL 2024}, pp.\  14969--14981. Association for Computational Linguistics (ACL), 2024.

\bibitem[Yoon et~al.()Yoon, Yoon, Hasegawa-Johnson, Kim, and Yoo]{confpo}
Hee~Suk Yoon, Eunseop Yoon, Mark~A Hasegawa-Johnson, Sungwoong Kim, and Chang~D Yoo.
\newblock Confpo: Exploiting policy model confidence for critical token selection in preference optimization.
\newblock In \emph{Forty-second International Conference on Machine Learning}.

\bibitem[Yu et~al.(2025)Yu, Zhang, Zhu, Yuan, Zuo, Yue, Dai, Fan, Liu, Liu, et~al.]{dapo}
Qiying Yu, Zheng Zhang, Ruofei Zhu, Yufeng Yuan, Xiaochen Zuo, Yu~Yue, Weinan Dai, Tiantian Fan, Gaohong Liu, Lingjun Liu, et~al.
\newblock Dapo: An open-source llm reinforcement learning system at scale.
\newblock \emph{arXiv preprint arXiv:2503.14476}, 2025.

\bibitem[Yuan et~al.(2024)Yuan, Li, Chen, Cui, Ding, Zhang, Zhou, Liu, and Peng]{freeprocess}
Lifan Yuan, Wendi Li, Huayu Chen, Ganqu Cui, Ning Ding, Kaiyan Zhang, Bowen Zhou, Zhiyuan Liu, and Hao Peng.
\newblock Free process rewards without process labels.
\newblock \emph{arXiv preprint arXiv:2412.01981}, 2024.

\bibitem[Zeng et~al.(2025)Zeng, Huang, Liu, Liu, He, Ma, and He]{simplerl}
Weihao Zeng, Yuzhen Huang, Qian Liu, Wei Liu, Keqing He, Zejun Ma, and Junxian He.
\newblock Simplerl-zoo: Investigating and taming zero reinforcement learning for open base models in the wild.
\newblock \emph{arXiv preprint arXiv:2503.18892}, 2025.

\bibitem[Zeng et~al.()Zeng, Liu, Ma, Yang, Zhang, and Wang]{tdpo}
Yongcheng Zeng, Guoqing Liu, Weiyu Ma, Ning Yang, Haifeng Zhang, and Jun Wang.
\newblock Token-level direct preference optimization.
\newblock In \emph{Forty-first International Conference on Machine Learning}.

\bibitem[Zhang et~al.(2025)Zhang, Zhang, Yang, Zhu, Zhao, Cao, Chen, and Yu]{progrm}
Danyang Zhang, Situo Zhang, Ziyue Yang, Zichen Zhu, Zihan Zhao, Ruisheng Cao, Lu~Chen, and Kai Yu.
\newblock Progrm: Build better gui agents with progress rewards.
\newblock \emph{arXiv preprint arXiv:2505.18121}, 2025.

\bibitem[Zhong et~al.()Zhong, Feng, Xiong, Cheng, Zhao, He, Bian, and Wang]{rto}
Han Zhong, Guhao Feng, Wei Xiong, Xinle Cheng, Li~Zhao, Di~He, Jiang Bian, and Liwei Wang.
\newblock Dpo meets ppo: Reinforced token optimization for rlhf.
\newblock In \emph{ICML 2024 Workshop on Models of Human Feedback for AI Alignment}.

\bibitem[Zhu et~al.()Zhu, Chen, Wang, Yu, Zhao, and Jia]{tgdpo}
Mingkang Zhu, Xi~Chen, Zhongdao Wang, Bei Yu, Hengshuang Zhao, and Jiaya Jia.
\newblock Tgdpo: Harnessing token-level reward guidance for enhancing direct preference optimization.
\newblock In \emph{Forty-second International Conference on Machine Learning}.

\end{thebibliography}
\bibliographystyle{iclr2026_conference}
\newpage
\appendix
\section{Appendix}
\subsection{Limitations and Future Work} 
While our study demonstrates that Progressively Ascending Confidence Reward (PACR) provides a powerful inductive bias for mathematical reasoning, a limitation is that our work is primarily confined to language models. Therefore, a promising direction for future work is to investigate the efficacy of the proposed PACR framework in multimodal reasoning tasks, such as visual math problems, using Vision Language Models (VLMs).

\subsection{Broader Impact}
This work introduces a new inductive bias designed to improve the reasoning capabilities of Large Language Models. By leveraging the model's intrinsic confidence dynamics, our method provides fine-grained, step-level supervision without the significant overhead of training separate reward models or requiring manual data annotation. By eliminating the need for external process-reward models or human-annotated datasets, this research significantly lowers the computational and financial barriers to entry for training sophisticated reasoning agents. 

\subsection{The Use of LLMs}\label{sec:llm}
We used LLMs solely for light editing such as correcting grammatical errors and polishing some words. They did not contribute to research ideation, experiments, analysis, or substantive writing. We have reviewed all AI-assisted edits and take full responsibility for the final content of this paper.

\subsection{Ethic Statement}
This research adheres to the highest standards of academic integrity. All existing work is appropriately cited, and this paper does not violate the use of others’ work without reference. The experiments conducted do not introduce new datasets or utilize any sensitive data related to demographic or identity characteristics. 

\subsection{Training Details}
\label{appendix:training_detail}
We present the details of our training configuration as follows. We use a total batch size of 128 and perform one PPO epoch per rollout. The per-device batch size is set to 4 for Qwen2.5-Math-1.5B, and 2 for both Qwen2.5-Math-7B and Qwen3-4B. During rollouts, we use a sampling temperature of 1.0 and generate 8 rollouts per prompt. For optimization, we use the AdamW optimizer \citep{adamw} with a constant learning rate of 1e-6, without warmup or scheduler. The maximum prompt and generation lengths are set to 1024 and 3000 tokens, respectively. For the KL penalty, we set the coefficient $\beta =0$, effectively deactivating it during training. For the $\lambda_1$, and $\lambda_2$, we search in the range of [1, 0.99, 0.9, 0.8, 0.5] and [0.01, 0.1, 0.2, 0.5], and for the both sparse and dense setting, $\lambda_1$ and $\lambda_2$ are set to 0.9, and 0.1, respectively

\subsection{Prompt used for Observation}
\label{appendix:prompt}
To analyze the coherence of the reasoning paths (Observation 2) and the correlation between the large stepwise confidence gain and the pivotal reasoning step (Observation 3) in Section \ref{sec:observation}, we utilize GPT-5 as an evaluator. The prompts used to evaluate the reasoning steps for these respective observations are shown in Figures \ref{fig:reasoning_eval_prompt} and \ref{fig:pairwise_eval_prompt}.

\begin{figure*}[h]
\begin{center}
\centerline{\includegraphics[width=0.9\linewidth]{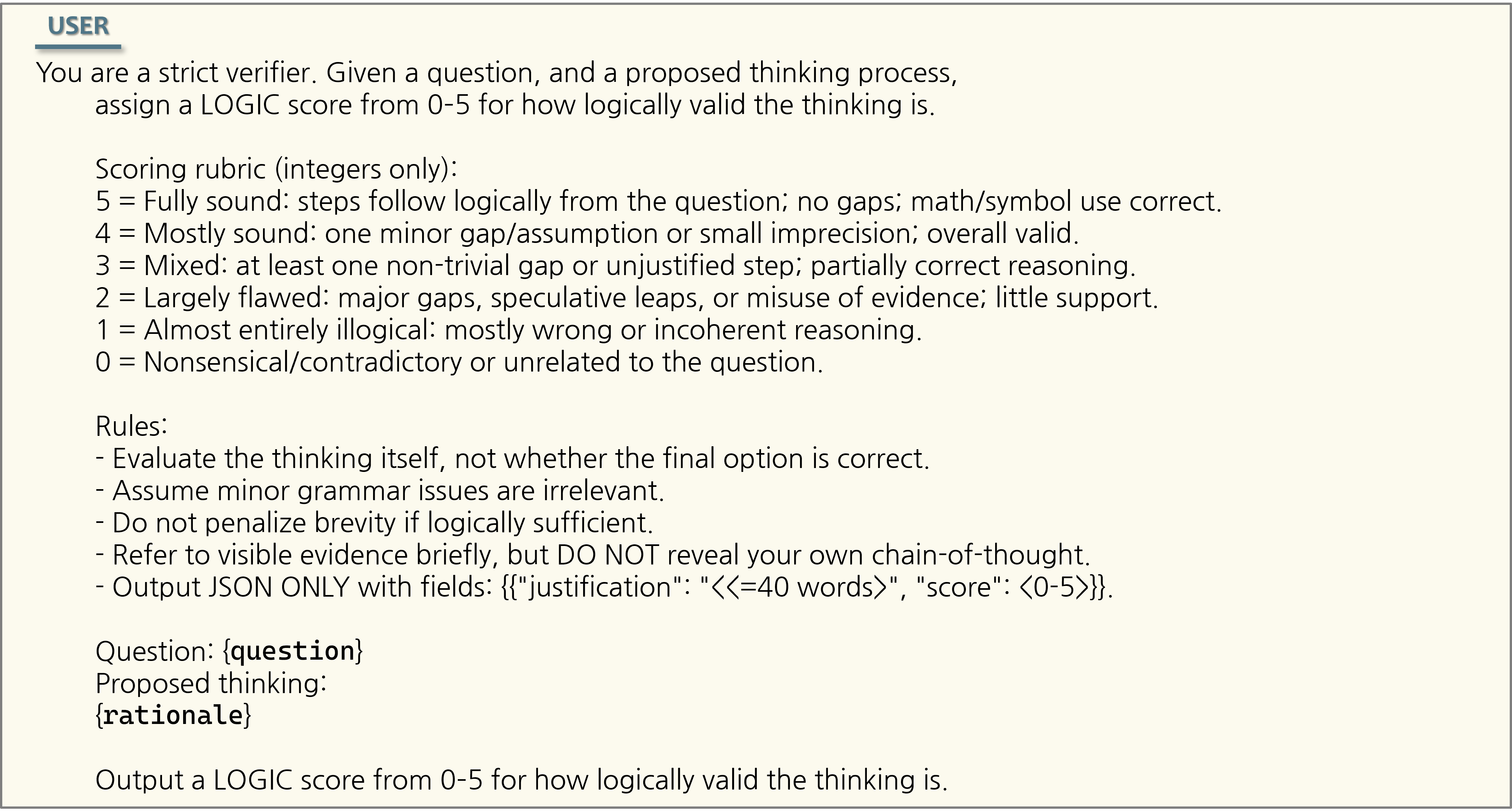}}
\caption{\textbf{Prompt used to evaluate reasoning quality for Observation 2.}}
\label{fig:reasoning_eval_prompt}
\end{center}
\end{figure*}

\begin{figure*}[h]
\begin{center}
\centerline{\includegraphics[width=0.9\linewidth]{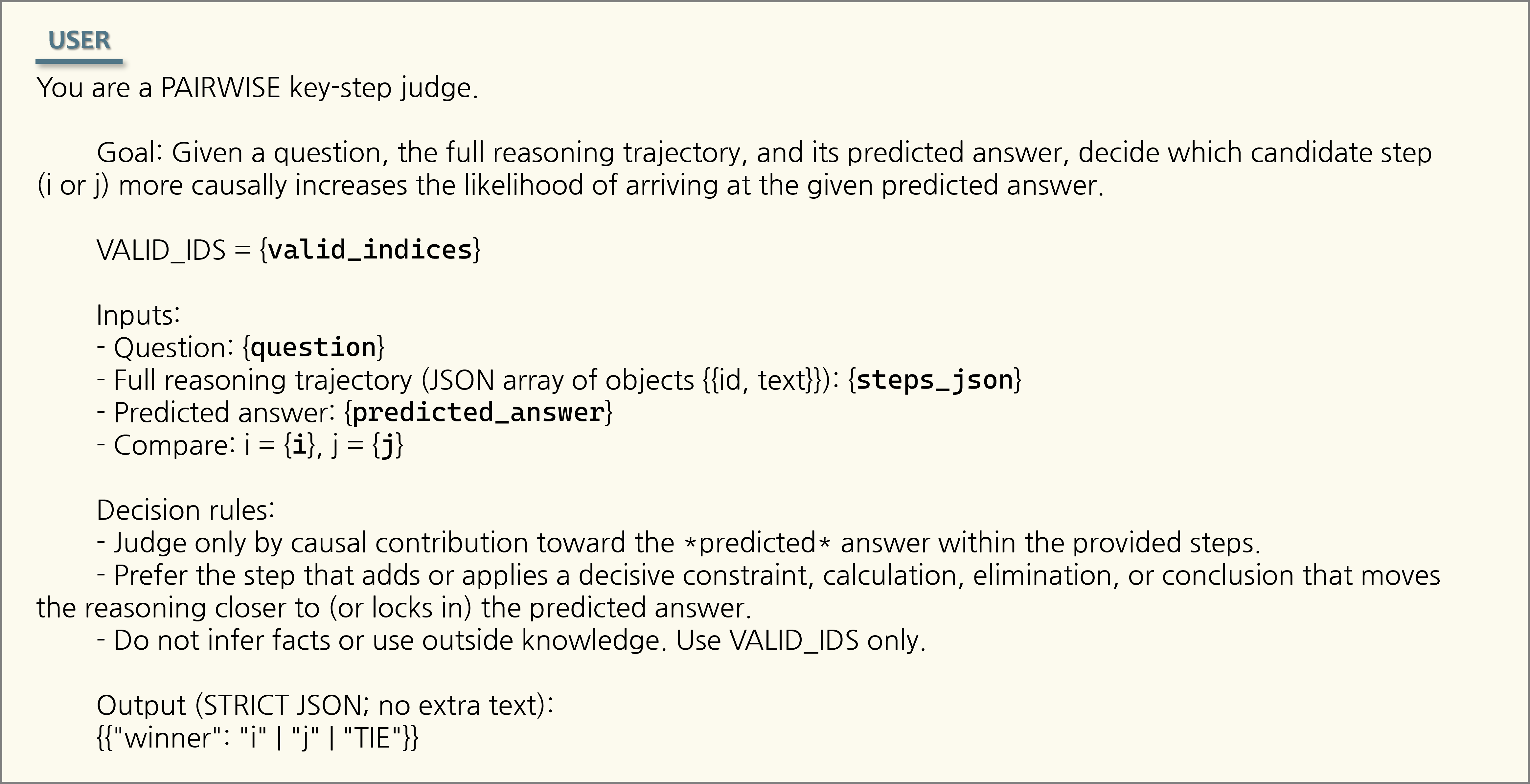}}
\caption{\textbf{Prompt used to evaulate pairwise comparison for the impactful reasoning steps in Observation 3.}}
\label{fig:pairwise_eval_prompt}
\end{center}
\end{figure*}

\subsection{Examples for Observation 3}
\label{appendix:obs3_qual}
This section provides additional qualitative examples that support the central claim of Observation 3. As illustrated by the reasoning trajectories from Qwen2.5-Math-7B (Figure \ref{fig:obs3_appendix_example_qwen}) and GPT-OSS-20B (Figure \ref{fig:obs3_appendix_example_gptoss1}, \ref{fig:obs3_appendix_example_gptoss2} and \ref{fig:obs3_appendix_example_gptoss3}), large positive spikes in the stepwise confidence gain $C_k$ consistently align with pivotal problem-solving steps, such as applying a key formula or executing a critical calculation.

\begin{figure*}[h]
\begin{center}
\centerline{\includegraphics[width=0.9\linewidth]{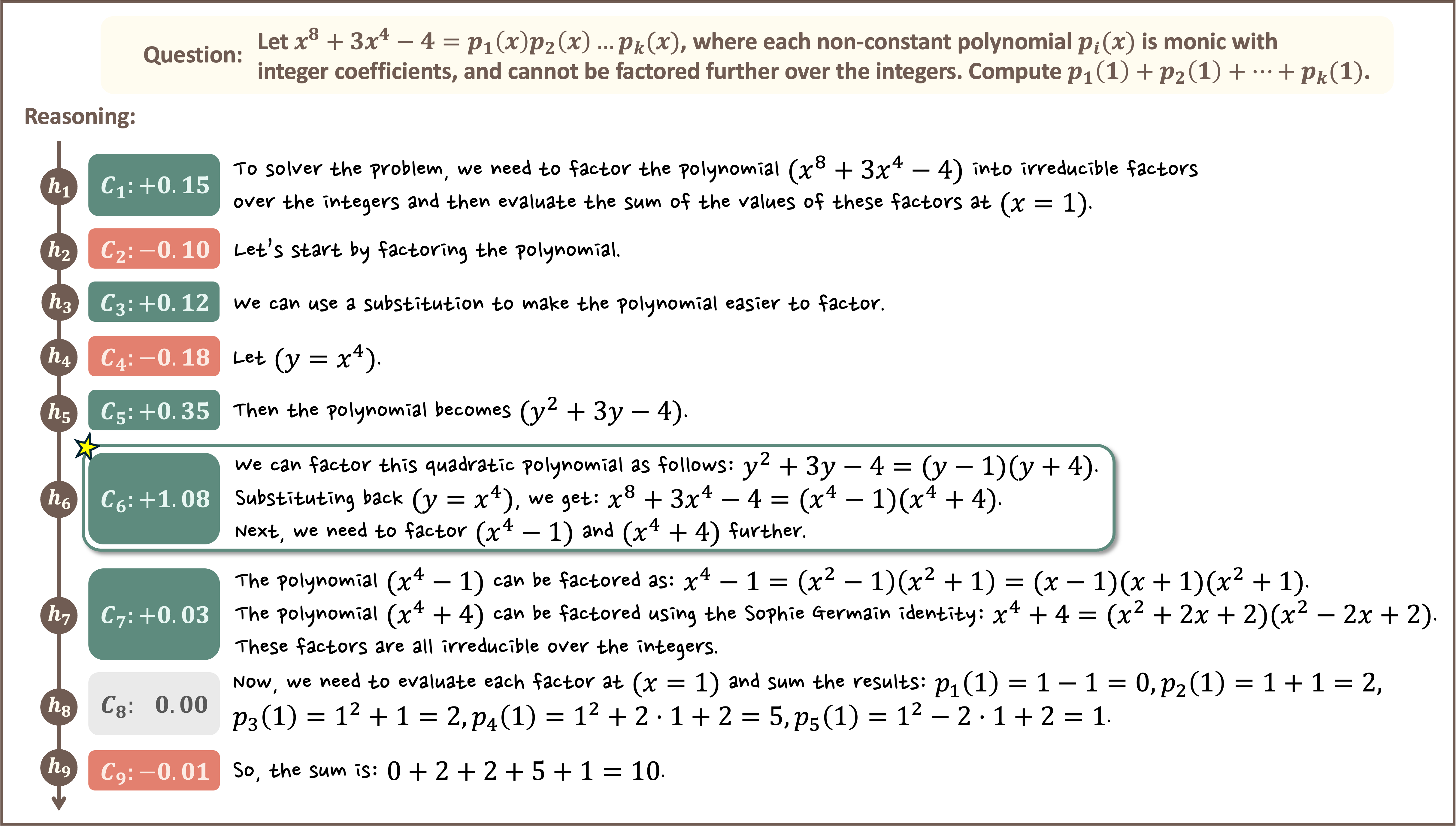}}
\caption{\textbf{Example trajectory from Qwen2.5-Math-7B for a polynomial factorization problem.}\\The model tackles the 8th-degree polynomial by reducing it to a quadratic form, then applying systematic factorization to break it down into irreducible integer-coefficient components. The pivotal moment occurs at step $h_6$, where the model executes the core factorization. By successfully factoring the simplified quadratic expression $y^2+3y-4$ and subsequently substituting $x^4$ back in for $y$, the model achieves the central algebraic decomposition of the original polynomial into two more tractable factors ($x^4-1$ and $x^4+4$). This step represents the main breakthrough required to solve the problem, and the corresponding confidence gain ($C_6=+1.08$) indicates the model's recognition that the most significant hurdle in the factorization process has been overcome.}
\label{fig:obs3_appendix_example_qwen}
\end{center}
\end{figure*}

\begin{figure*}[h]
\begin{center}
\centerline{\includegraphics[width=0.9\linewidth]{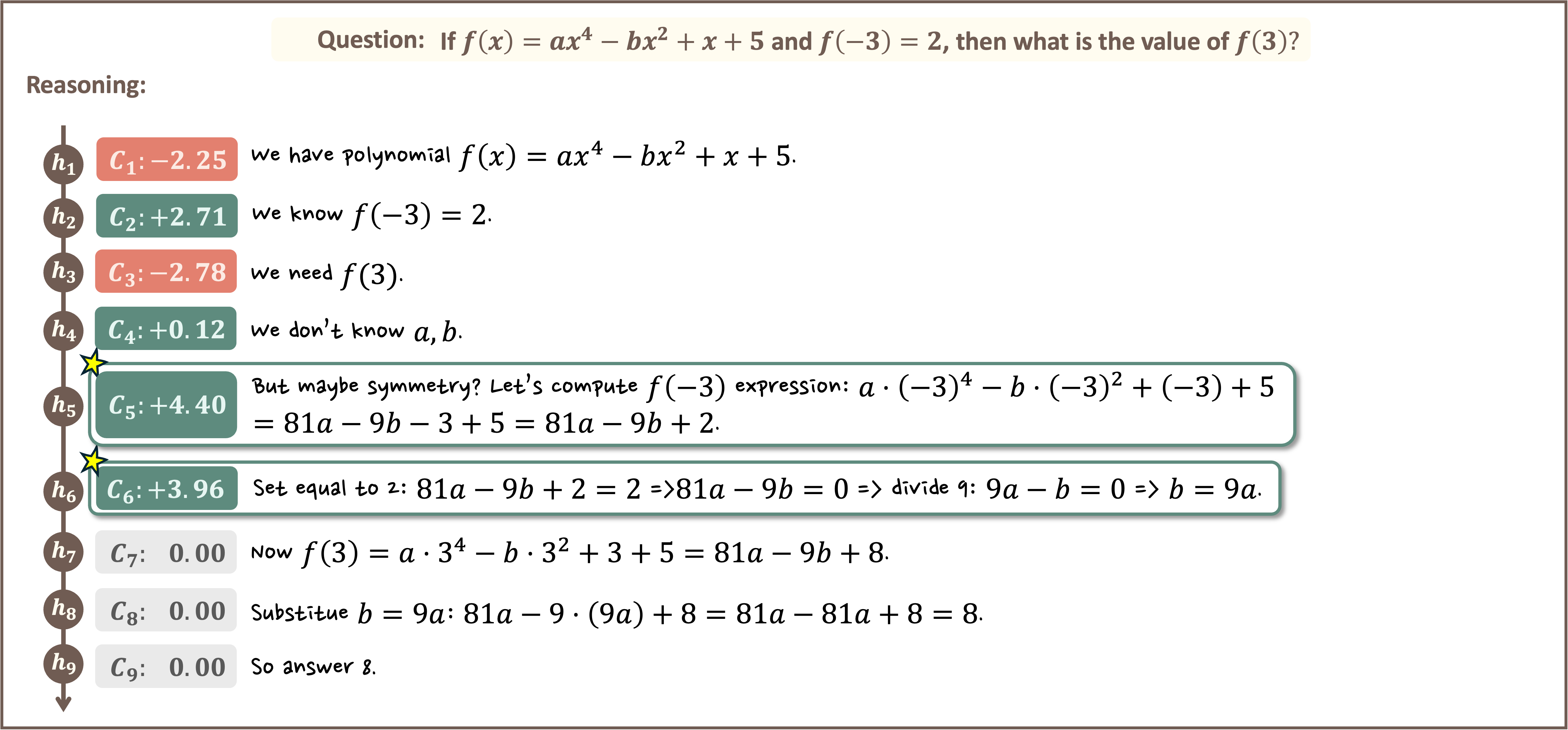}}
\caption{\textbf{Example trajectory from GPT-OSS-20B for a polynomial symmetry problem.}\\The problem appears ostensibly unsolvable due to the unknown coefficients $a$ and $b$. The critical insight emerges across steps $h_5$ and $h_6$, where the model leverages the inherent symmetry of the function's even-powered terms ($ax^4, -bx^2$) and utilizes the given condition $f(-3) = 2$. By evaluating the expression for $f(-3)$ and equating it to $2$, the model uncovers the essential relationship between the unknown coefficients ($b = 9a$). This discovery is the pivotal moment that unlocks the entire problem, as it enables the cancellation of the unknown terms in the subsequent calculation of $f(3)$. 
}
\label{fig:obs3_appendix_example_gptoss1}
\end{center}
\end{figure*}

\begin{figure*}[h]
\begin{center}
\centerline{\includegraphics[width=0.9\linewidth]{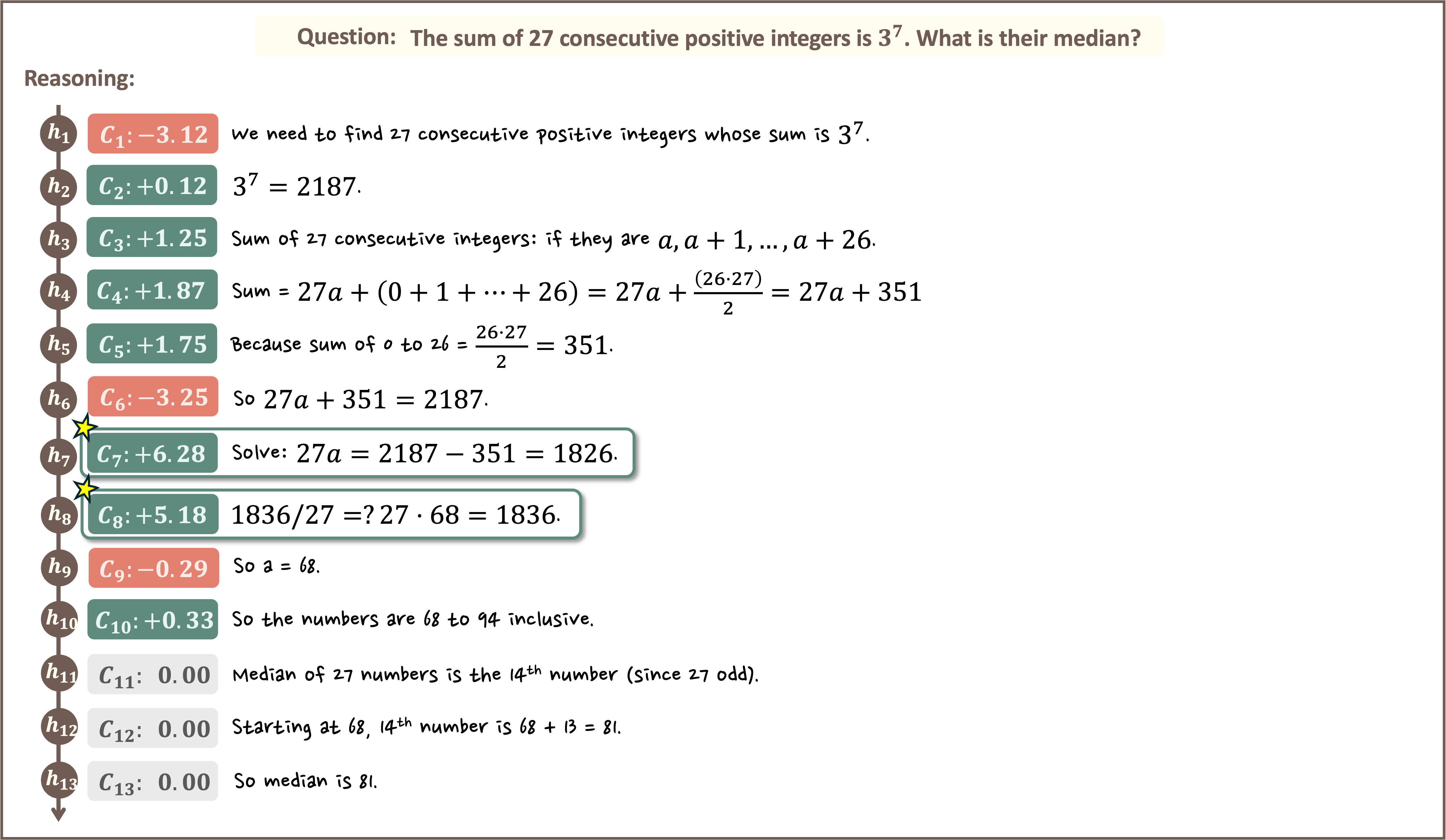}}
\vskip -0.1in
\caption{\textbf{Example trajectory from GPT-OSS-20B for a consecutive integer median problem.}\\The initial phase of this problem involves correctly establishing an algebraic equation for the sum of the integers: $27a + 351 = 2187$. The pivotal moment is a two-step process spanning $h_7$ and $h_8$. In $h_7$, the model transitions from algebraic formulation to the decisive computation to solve for the first integer, $a$, by successfully isolating the term $27a$. This is immediately followed by the critical execution in $h_8$, where the actual division is performed to find the explicit value of $a$. Securing the value of $a$ is the primary breakthrough, as it provides the key to determining all numbers in the sequence and thus the median. 
}
\label{fig:obs3_appendix_example_gptoss2}
\end{center}
\end{figure*}

\begin{figure*}[h]
\begin{center}
\centerline{\includegraphics[width=0.9\linewidth]{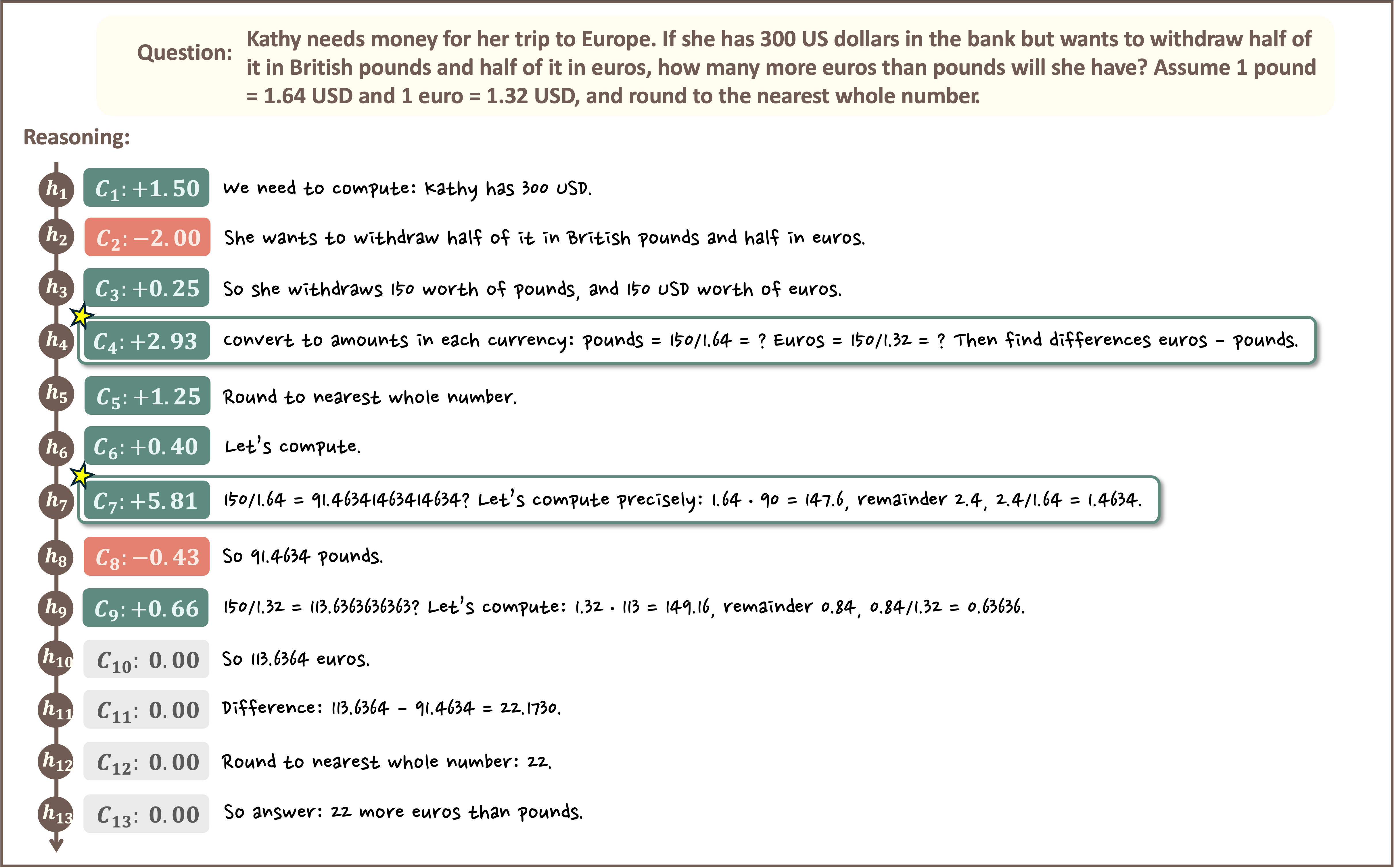}}
\vskip -0.1in
\caption{\textbf{Example trajectory from GPT-OSS-20B for a currency exchange problem.}\\ This reasoning trajectory features two pivotal moments. First, step $h_4$ serves as a critical planning phase, where the model correctly formulates the computational roadmap required for the solution: two currency conversions via division, followed by a subtraction. This demonstrates a comprehensive understanding of the problem's logic. The second, more significant pivotal moment occurs at the execution phase in step $h_7$, where the model accurately performs the first of the two required divisions. Successfully clearing this key computational hurdle provides the model with high confidence ($C_7 = +5.81$) that its strategy is effective and the path to the final answer is now clear.}
\label{fig:obs3_appendix_example_gptoss3}
\end{center}
\end{figure*}

\end{document}